\begin{document}

\newcommand{\END}{\qed}
\newcommand{\LT}{\mathcal{L}}
\newcommand{\eqa}{\thickapprox}
\newcommand{\eqb}{\equiv}
\newcommand{\PS}[1]{\mathscr{P}#1}
\newcommand{\PSB}[1]{\widehat{\mathscr{P}}#1}
\newcommand{\h}[1]{\overline{#1}}
\newcommand{\set}[1]{\{#1\}}
\newcommand{\Al}{\biguplus}
\newcommand{\RC}{\div_R}
\newcommand{\LC}{\div_L}
\newcommand{\wei}{\textit{weight}}
\newcommand{\LG}[1]{\langle #1 \rangle}
\newcommand{\E}{\mathbb{S}}
\newcommand{\EC}{\widehat{\mathcal{S}}}
\newcommand{\PO}{\prec}
\newcommand{\st}{\text{\textit{\#step}}}
\newcommand{\com}{<\!\!>}
\newcommand{\df}{\ \stackrel{\mbox{\textit{\scriptsize{df}}}}{=}\ }
\newcommand{\iffdf}{\stackrel{\mbox{\textit{\scriptsize{df}}}}{\iff}\ }
\newcommand{\calf}[1]{\mathcal{#1}}
\newcommand{\sq}{\sqsubset}
\newcommand{\ccl}{\;\bowtie}
\newcommand{\todo}[1]{ \textcolor{red}{TODO: #1}}
\newcommand{\tcomment}[1]{\text{\hspace*{2mm}$\langle$~\parbox[t]{\textwidth}{ #1 $\rangle$}}}
\newcommand{\sym}[1]{{#1}^{\leftrightarrows\;}}
\newcommand{\lex}[1]{\,{#1}^{\textit{lex}}\,}
\newcommand{\stor}[1]{\,{#1}^{\textit{st}}\,}
\newcommand{\reco}[1]{ {#1}^{\,\textbf{\textit{C}}} }
\newcommand{\It}[1]{\mathit{#1}}

\newcommand{\seq}[1]{\la#1\ra}
\newcommand{\defref}[1]{Definition~\ref{def:#1}}
\newcommand{\theoref}[1]{Theorem~\ref{theo:#1}}
\newcommand{\propref}[1]{Proposition~\ref{prop:#1}}
\newcommand{\lemref}[1]{Lemma~\ref{lem:#1}}
\newcommand{\exref}[1]{Example~\ref{ex:#1}}
\newcommand{\reref}[1]{Remark~\ref{re:#1}}
\newcommand{\eref}[1]{\eqref{eq:#1}}
\newcommand{\figref}[1]{Fig.~\ref{fig:#1}}

\newcommand{\secref}[1]{Section~\ref{sec:#1}}
\newcommand{\eqnref}[1]{Eq.~(\ref{eq:#1})}
\newcommand{\colq}[1]{\left(\begin{array}{c}#1\end{array}\right)}
\newcommand{\colqq}[1]{\left(\begin{array}{ll}#1\end{array}\right)}
\newcommand{\cmodel}[1]{\mathfrak{#1}}
\newcommand{\model}[1]{\mathcal{#1}}
\newcommand{\Tuniverse}{\mathcal{T}_\It{univ}}
\newcommand{\Tspec}{\mathcal{T}_\It{spec}}
\newcommand{\Tgso}{\mathcal{T}_\It{gso}}
\newcommand{\Tpslcore}{\mathcal{T}_\It{pslcore}}
\newcommand{\Cuniverse}{\cmodel{M}^\It{univ}}
\newcommand{\Cspec}{\cmodel{M}^\It{spec}}
\newcommand{\Cgso}{\cmodel{M}^\It{gso}}
\newcommand{\CO}[1]{\mathbb{C}(#1)}
\newcommand{\ICO}[1]{\mathbb{IC}(#1)}
\newcommand{\EOD}{{\hfill \begin{scriptsize}$\blacksquare$\end{scriptsize}}}
\newcommand{\tot}{\blacktriangleleft}
\newcommand{\G}[1]{\mathbb{G}(#1)}
\newcommand{\GI}[1]{\widehat{\mathbb{G}}(#1)}

\numberwithin{equation}{section}

\title{Modelling Concurrent Behaviors in the Process Specification Language}
\author{Dai Tri Man L\^e}
\institute{Department of Computer Science,\\
University of Toronto,\\
Toronto, ON, M5S 3G4 Canada\\
{ledt@cs.toronto.edu}}

\date{}

\maketitle
\pagestyle{headings} 
\begin{abstract}
In this paper, we propose a first-order ontology for generalized stratified order structure. We then classify the models of the theory using model-theoretic techniques. An ontology mapping from this ontology to the core theory of Process Specification Language is also discussed.
\end{abstract}

\section{Introduction}
In Process Specification Language (PSL), the ordering of event (activity) occurrences is modelled using occurrence trees, which are restricted forms of partial orders. Although partial orders can sufficiently model the ``earlier than" relationship, they cannot \emph{explicitly} model the ``not later than" relationship \cite{J4}.  For instance, if an event $a$ is performed ``not later than" an event $b$, then this ``not later than" relationship can be modelled by the following set of two step sequences $x=\{\{a\}\{b\},\{a,b\}\}$, where the \textit{step} $\{a,b\}$ models the simultaneous performance of $a$ and $b$. But the set $x$ can not be represented by any partial order.

To provide a unified framework for analyzing ``earlier than'' and ``not later than'' relationships, we proposed to interpret the \emph{generalized stratified order structure} (\emph{gso-structure}) theory  within PSL. The gso-structure theory is originated from causal partial order theory and \emph{stratified order structure} (\emph{so-structure}) theory. A so-structure \cite{GP,JK0,J1,JK} is a triple $(X,\prec,\sqsubset)$, where $\prec$ and $\sqsubset$ are binary relations on $X$. They were invented to model both ``earlier than" (the relation $\prec$) and ``not later than" (the relation $\sqsubset$) relationships, under the assumption that all \emph{system runs} (also called \emph{observations}) are modelled by \emph{stratified orders}, i.e., step sequences. They have been successfully applied to model inhibitor and priority systems, asynchronous races, synthesis problems, etc. (see for example \cite{J1,JLM,KK} and others).  However, so-structures can adequately model concurrent histories only when the paradigm $\pi_3$ of \cite{J4,JK} is satisfied. Paradigm $\pi_3$ says that if two event occurrences are observed in both orders of execution, then they will also be observed executing simultaneously. Without this assumption, we need gso-structures, which were introduced and analyzed in \cite{GJ}. The comprehensive theory for gso-structures has been developed in \cite{J0,Le}. A gso-structure is a triple $\left(X,\com,\sqsubset\right)$, where $\com$ and $\sqsubset$ are binary relations on $X$ modelling ``never simultaneously'' and ``not later than'' relationships respectively under the assumption that all system runs are modelled by stratified orders.  Intuitively, gso-structures can model even the situation when we have the mixture of ``true concurrency'' and interleaving semantics. The only disadvantage is that gso-structures are more complex to conceptualize than so-structures. 

Since the works of Janicki et al. \cite{J4,J0} focus on the algebraic properties of gso-structures, the number of axioms are kept to minimal and some of the assumptions are made implicit. Furthermore, the theorems of gso-structure theory frequently involve quantifying over relations, which requires the use of higher-order language. Hence, to apply first-order ontology and model-theoretic techniques in the manner as in \cite{Gr2}, we will first define a formal ontology for gso-structure in first-order logic and characterize all possible models of gso-structure theory up to isomorphism. After that we can proceed to investigate to what extend the theorems of gso-structure theory hold within the first-order setting of PSL by studying possible ontological mappings from gso-structure theory to PSL. 

The organization of this paper is as follows. In Section~2, we will give a first-order axiomatization of the gso-structure theory and end the section will a result showing that our theory is consistent. In Section~3, we will classify all possible models of the gso-structure theory from Section~2 using more natural and intuitive concepts from graph theory. In Section~4, we study a semantic mapping from our theory to PSL-core theory. Section~5 contains our concluding remarks.

\newcommand{\event}{\mathsf{event}}
\newcommand{\occurrence}{\mathsf{occurrence}}
\newcommand{\eventocc}{\mathsf{event\_occurrence}}
\newcommand{\observation}{\mathsf{observation}}
\newcommand{\obefore}{\mathsf{observed\_before}}
\newcommand{\osim}{\mathsf{observed\_simult}}
\newcommand{\notlaterthan}{\mathsf{not\_later\_than}}
\newcommand{\earlierthan}{\mathsf{earlier\_than}}
\newcommand{\commutativity}{\mathsf{nonsimultaneous}}

\section{First-order axiomatization of gso-structure theory}
The following table provides a summary of the lexicon of so-structure theory. The relations $\PO$, $\sq$ and $\com$ in the papers of Janicki et al. \cite{J0,J4} correspond to the relations $\earlierthan$, $\notlaterthan$ and $\commutativity$ respectively in this paper. We rename these relations to make the theory more intuitive and accessible.
\begin{center}
\begin{tabular}{|p{2.5cm}||p{4cm}|p{5cm}|}
\hline
&  Lexicon  & Informal Semantics \\
\hline\hline
Universe &	$\event(e)$  & $e$ is an event \\
\cline{2-3}
	&$\eventocc(o)$  & $o$ is an event occurrence\\
\cline{2-3}
	&$\observation(x)$  & $x$ is an observation \\
\cline{2-3}
	&$\occurrence(o,e)$  & $o$ is an event occurrence of event $e$ \\
\hline\hline
Gso-structure&$\earlierthan(o_1,o_2)$  & $o_1$ must occur earlier than $o_2$ \\
\cline{2-3}
	&$\notlaterthan(o_1,o_2)$  & $o_1$ must occur not later than $o_2$ \\
\cline{2-3}
	&$\commutativity(o_1,o_2)$  & $o_1$ and $o_2$ must not occur simultaneously \\
\hline\hline
Observations&$\obefore(o_1,o_2,x)$  & event occurrence $o_1$ is observed before event occurrence $o_2$ in observation $x$ \\
\cline{2-3}
	&$\osim(o_1,o_2,x)$  & event occurrences $o_1$ and  $o_2$ are observed simultaneously in observation $x$ \\
\hline
\end{tabular} 
\end{center}

\subsection{Events, event occurrences and observations \label{sec:events}}
Everything is either an \emph{event}, \emph{event occurrence} or \emph{observation}:
\begin{align}
(\forall x)\left( \event(x)\vee\eventocc(x)\vee\observation(x)\right)
\label{eq:e1}
\end{align}
The sets of events, event occurrences and observations are pair-wise disjoint.
\begin{align}
(\forall x)
\colq{\neg(\event(x)\wedge\eventocc(x))\wedge\neg(\event(x)\wedge\observation(x))\\
\wedge\neg(\eventocc(x)\wedge\observation(x))}
\label{eq:e2}
\end{align}
The \emph{occurrence} relation only holds between events and event occurrences.
\begin{align}
(\forall e,o)\left(\occurrence(o,e)\supset \event(e)\wedge\eventocc(o)\right)
\label{eq:e3}
\end{align}
Every event occurrence is an occurrence of some event.
\begin{align}
(\forall o)\left(\eventocc(o)\supset (\exists e)\left(\event(e)\wedge\occurrence(o,e)\right)\right)
\label{eq:e4}
\end{align}
Every event occurrence is an occurrence of a unique event.
\begin{align}
(\forall o_1,e_1,e_2)\left(\occurrence(o_1,e_1)\wedge\occurrence(o_1,e_2)\supset e_1=e_2\right)
\label{eq:e5}
\end{align}

\subsection{Gso-structure and its relations \label{sec:gso}}
\begin{sloppypar}
We now axiomatize the gso-structure, which describes the \emph{specification level} of a concurrent system. The relations of gso-structure are $\earlierthan$, $\notlaterthan$ and $\commutativity$. The relation $\earlierthan$ can be defined as the intersection of the latter two, yet is added because it helps to make our axioms shorter and more intuitive.\\

We have to make sure that the field of the relations $\earlierthan$, $\notlaterthan$ and $\commutativity$  consists of only event occurrences.
\end{sloppypar}
\begin{align}
(\forall o_1,o_2)\colq{\earlierthan(o_1,o_2) \supset (\occurrence(o_1)\wedge\occurrence(o_2)) } \label{eq:gso1}\\
(\forall o_1,o_2)\colq{\notlaterthan(o_1,o_2)\supset (\occurrence(o_1)\wedge\occurrence(o_2)) }  \label{eq:gso2}\\
(\forall o_1,o_2)\colq{\commutativity(o_1,o_2) \supset (\occurrence(o_1)\wedge\occurrence(o_2)) }  \label{eq:gso3}
\end{align}
The relation $\commutativity$ is irreflexive and symmetric.
\begin{align}
(\forall o_1)(\neg \commutativity(o_1,o_1)) \label{eq:gso4}\\
(\forall o_1,o_2)(\commutativity(o_1,o_2)\supset\commutativity(o_2,o_1)) \label{eq:gso5}
\end{align}
The $\earlierthan$ relation is the intersection of the $\notlaterthan$ and the $\commutativity$ relations.
\begin{align}
(\forall o_1,o_2)\colq{\left(\notlaterthan(o_1,o_2)\wedge\commutativity(o_1,o_2)\right)\\\equiv\earlierthan(o_1,o_2)} \label{eq:gso6}
\end{align}
The $\notlaterthan$ relation is irreflexive.
\begin{align}
(\forall o_1)(\neg \notlaterthan(o_1,o_1)) \label{eq:gso7}
\end{align}
The $\notlaterthan$ and $\earlierthan$ relations satisfy some weak form of transitivity.
\begin{align}
(\forall o_1,o_2,o_3)\colq{\notlaterthan(o_1,o_2)\wedge\notlaterthan(o_2,o_3)\wedge o_1\not=o_3\\\supset\notlaterthan(o_1,o_3)} \label{eq:gso8}\\
(\forall o_1,o_2,o_3)\colq{\colqq{&(\notlaterthan(o_1,o_2)\wedge \earlierthan(o_2,o_3))\\\vee& (\earlierthan(o_1,o_2)\wedge\notlaterthan(o_2,o_3))}\\\supset\earlierthan(o_1,o_3)} \label{eq:gso9}
\end{align}
\bigskip\\

The following propositions are helpful in understanding the relations of a gso-structure. The first proposition basically says that the $\earlierthan$ relation is a partial order.

\begin{proposition}
\begin{align*}
(\forall o_1)(\neg \earlierthan(o_1,o_1)) \\
(\forall o_1,o_2,o_3)\colq{\earlierthan(o_1,o_2)\wedge\earlierthan(o_2,o_3)\\\supset\earlierthan(o_1,o_3)} 
\end{align*}
\label{prop:gso1}
\end{proposition}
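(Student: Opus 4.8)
The plan is to derive both claims directly from the axioms, relying on the fact that $\earlierthan$ is definable as the intersection of $\notlaterthan$ and $\commutativity$ via \eref{gso6}.

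For irreflexivity, I would argue as follows. Suppose toward a contradiction that $\earlierthan(o_1,o_1)$ holds for some $o_1$. By \eref{gso6}, $\earlierthan(o_1,o_1)$ is equivalent to $\notlaterthan(o_1,o_1)\wedge\commutativity(o_1,o_1)$, so in particular $\notlaterthan(o_1,o_1)$ must hold. But this contradicts axiom \eref{gso7}, which asserts that $\notlaterthan$ is irreflexive. (One could equally well invoke \eref{gso4}, the irreflexivity of $\commutativity$, since $\earlierthan(o_1,o_1)$ also forces $\commutativity(o_1,o_1)$.) Hence no such $o_1$ exists, giving the first conjunct.

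For transitivity, suppose $\earlierthan(o_1,o_2)$ and $\earlierthan(o_2,o_3)$. Using \eref{gso6}, unfold each hypothesis into its two conjuncts: from the first we get $\notlaterthan(o_1,o_2)$ and $\commutativity(o_1,o_2)$, and from the second $\notlaterthan(o_2,o_3)$ and $\commutativity(o_2,o_3)$. In particular we have $\notlaterthan(o_1,o_2)$ together with $\earlierthan(o_2,o_3)$, which matches the first disjunct in the antecedent of the mixed-transitivity axiom \eref{gso9}; applying \eref{gso9} yields $\earlierthan(o_1,o_3)$ directly, which is the desired conclusion.

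The only subtlety worth checking is that the transitivity step really does not require the full transitivity of $\notlaterthan$ from \eref{gso8} (which carries the side condition $o_1\neq o_3$). Because \eref{gso9} is stated with no such side condition and already delivers an $\earlierthan$ conclusion rather than merely a $\notlaterthan$ conclusion, it alone suffices, and the potential obstacle of the $o_1\neq o_3$ restriction never arises. Thus both parts follow by short propositional reasoning from \eref{gso6}, \eref{gso7}, and \eref{gso9}, with no need to reason about the observations or the occurrence predicates at all.
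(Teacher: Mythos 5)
Your proof is correct and follows essentially the same route as the paper's: irreflexivity from \eref{gso6} together with the irreflexivity of $\notlaterthan$ in \eref{gso7}, and transitivity by unfolding one hypothesis via \eref{gso6} and feeding a disjunct of the antecedent of the mixed-transitivity axiom \eref{gso9}. The paper's proof is just a terse citation of those same axiom pairs, so your write-up merely makes the details (and the observation that \eref{gso8} is not needed) explicit.
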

\begin{proof} The irreflexivity property follows from axioms \eref{gso6} and \eref{gso7}. The transitivity property follows from axioms \eref{gso6} and \eref{gso9}.\qed 
\end{proof}

The second proposition shows the intuition that if two event occurrences must happen not later than each other, then they must occur simultaneously.

\begin{proposition}
\begin{align*}
(\forall o_1,o_2)\colq{\notlaterthan(o_1,o_2)\wedge\notlaterthan(o_2,o_1)\\\supset\neg\commutativity(o_1,o_2)} 
\end{align*}
\label{prop:gso2}
\end{proposition}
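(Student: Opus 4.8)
The plan is to argue by contradiction, exploiting that the earlier-than relation has already been shown to be a strict partial order in \propref{gso1}. Assume $\notlaterthan(o_1,o_2)$ and $\notlaterthan(o_2,o_1)$ both hold, and suppose toward a contradiction that $\commutativity(o_1,o_2)$ holds as well. The guiding idea is that the defining biconditional \eref{gso6} converts a not-later-than fact together with a nonsimultaneous fact into an earlier-than fact; if I can produce such a pair in both directions, the partial-order structure of $\earlierthan$ will yield a contradiction.

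First I would apply the symmetry axiom \eref{gso5} to the assumption $\commutativity(o_1,o_2)$ to obtain $\commutativity(o_2,o_1)$; this is needed because the second instantiation of \eref{gso6} will be on the reversed pair $(o_2,o_1)$. Next I would instantiate \eref{gso6} twice: combining $\notlaterthan(o_1,o_2)$ with $\commutativity(o_1,o_2)$ gives $\earlierthan(o_1,o_2)$, and combining $\notlaterthan(o_2,o_1)$ with $\commutativity(o_2,o_1)$ gives $\earlierthan(o_2,o_1)$. Finally I would apply the transitivity part of \propref{gso1} to these two facts to derive $\earlierthan(o_1,o_1)$, which contradicts the irreflexivity part of the same proposition, and so conclude $\neg\commutativity(o_1,o_2)$.

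There is no genuinely hard step; the argument is a short chain of instantiations. The only points that require a little care are remembering to symmetrize the nonsimultaneous relation via \eref{gso5} before the second use of \eref{gso6}, and choosing to lean on the already-established partial-order behaviour of $\earlierthan$ from \propref{gso1} rather than attempting to manufacture the contradiction directly from the raw transitivity axioms \eref{gso8} and \eref{gso9}.
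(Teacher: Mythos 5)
Your proposal is correct and follows exactly the paper's own argument: assume the conjunction for contradiction, symmetrize $\commutativity$ via \eref{gso5}, apply \eref{gso6} in both directions to get $\earlierthan(o_1,o_2)$ and $\earlierthan(o_2,o_1)$, and then contradict the irreflexivity of $\earlierthan$ using its transitivity, both from \propref{gso1}. No differences worth noting.
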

\begin{proof}
We assume for a contradiction that there are some observations $o_1$ and $o_2$ such that
\[\notlaterthan(o_1,o_2)\wedge\notlaterthan(o_2,o_1)\wedge\commutativity(o_1,o_2) \] 
Then since $\notlaterthan(o_1,o_2)\wedge\commutativity(o_1,o_2)$, it follows from axiom \eref{gso6} that $\earlierthan(o_1,o_2)$. But since $\commutativity$ is symmetric (axiom \eref{gso5}), we also have $\notlaterthan(o_2,o_1)\wedge\commutativity(o_2,o_1)$. 

Thus we have  $\earlierthan(o_1,o_2)$ and $\earlierthan(o_2,o_1)$, which by \propref{gso1} implies $\earlierthan(o_1,o_1)$. But this contradicts with \propref{gso1}, which says that the $\earlierthan$ relation is irreflexive. \qed
\end{proof}

The third proposition shows the intuition that if the first event happens earlier than the second event, then it is not the case that the second event happens not later than the first event.
\begin{proposition}
\begin{align*}
(\forall o_1,o_2)\colq{\earlierthan(o_1,o_2)\supset\neg\notlaterthan(o_2,o_1)} 
\end{align*}
\label{prop:gso3}
\end{proposition}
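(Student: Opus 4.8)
The plan is to argue by contradiction, exactly as with \propref{gso2}, but this time recruiting that earlier proposition as a lemma. So I would begin by supposing that for some event occurrences we have both $\earlierthan(o_1,o_2)$ and $\notlaterthan(o_2,o_1)$, and aim to derive a contradiction from the commutativity constraints.

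The first step is to unpack the hypothesis $\earlierthan(o_1,o_2)$ using axiom \eref{gso6}, which tells us that $\earlierthan$ is precisely the intersection of $\notlaterthan$ and $\commutativity$. This immediately yields both $\notlaterthan(o_1,o_2)$ and $\commutativity(o_1,o_2)$. Combining the first of these with the assumed $\notlaterthan(o_2,o_1)$, I now have the not-later-than relation holding in both directions between $o_1$ and $o_2$.

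The crucial step is then to invoke \propref{gso2}, whose whole purpose is to say that mutual ``not later than'' forces simultaneity, i.e.\ rules out ``nonsimultaneous''. Applying it to $\notlaterthan(o_1,o_2)\wedge\notlaterthan(o_2,o_1)$ gives $\neg\commutativity(o_1,o_2)$, which directly contradicts the $\commutativity(o_1,o_2)$ extracted in the previous step. That closes the argument.

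I do not expect any genuine obstacle here; the statement is a short corollary of the already-established \propref{gso2}. The only thing requiring a little care is recognizing that the right tool is \propref{gso2} rather than trying to reason directly from the transitivity axioms \eref{gso8} and \eref{gso9}: the leverage comes entirely from the fact that $\earlierthan(o_1,o_2)$ secretly carries a $\commutativity(o_1,o_2)$ component via \eref{gso6}, which is exactly the conclusion \propref{gso2} forbids. Once that connection is spotted, the proof is a two-line deduction.
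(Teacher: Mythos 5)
Your proof is correct, but it follows a genuinely different route from the paper's. The paper argues by contradiction entirely inside the ``earlier than'' world: assuming $\earlierthan(o_1,o_2)$ and $\notlaterthan(o_2,o_1)$, it applies the mixed-transitivity axiom \eref{gso9} to close the cycle and obtain a reflexive $\earlierthan$ pair, contradicting the irreflexivity of $\earlierthan$ from \propref{gso1}; the $\commutativity$ relation never enters the argument. You instead unpack $\earlierthan(o_1,o_2)$ via \eref{gso6} into $\notlaterthan(o_1,o_2)\wedge\commutativity(o_1,o_2)$ and then hit the mutual $\notlaterthan$ pair with \propref{gso2} to conclude $\neg\commutativity(o_1,o_2)$, contradicting the component you just extracted. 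Both are short and valid. Your version buys conceptual transparency --- it makes explicit that an $\earlierthan$ edge carries a ``nonsimultaneous'' commitment, which is exactly what mutual ``not later than'' forbids --- and it reuses \propref{gso2} rather than invoking \eref{gso9} afresh (that axiom is only used indirectly, through the transitivity underlying \propref{gso1} in the proof of \propref{gso2}). The paper's version is slightly more economical in its dependencies, needing neither \eref{gso6} nor the symmetry axiom \eref{gso5} hidden inside \propref{gso2}. Incidentally, your argument also sidesteps two blemishes in the paper's write-up: its displayed assumption-for-contradiction mistakenly restates the proposition instead of its negation, and its intermediate claim that \eref{gso9} yields $\earlierthan(o_2,o_1)$ is not what the natural instantiation gives (one gets $\earlierthan(o_1,o_1)$, or $\earlierthan(o_2,o_2)$, directly); your deduction has no such gaps.
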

\begin{proof}
We assume for a contradiction that there are some observations $o_1$ and $o_2$ such that
\[(\forall o_1,o_2)\colq{\earlierthan(o_1,o_2)\supset\neg\notlaterthan(o_2,o_1)}  \]
Then by the axiom \eref{gso9}, we have $\earlierthan(o_2,o_1)$. Thus,  $\earlierthan(o_1,o_2)$ and $\earlierthan(o_2,o_1)$, which by \propref{gso1} implies $\earlierthan(o_1,o_1)$. But this contradicts with \propref{gso1}, which says that the $\earlierthan$ relation is irreflexive. \qed
\end{proof}

\begin{example}
Assume the set of all possible event occurrences is $\set{o_i : 1\le i\le 7}$. The following is an example of a gso-structure, where 
\begin{enumerate}
 \item The $\earlierthan$ relation is represented by a directed \emph{acyclic} graph $G_1$:
\[
\xymatrix@!=2pc{
			&		&			&		&\\
			&o_2\ar[dr]\ar@{.>}@/^1pc/[rr]\ar@{.>}@/^/[drrr]\ar@{.>}@/^1pc/[ddrr]& 			&o_5	&\\
o_1\ar[dr]\ar[ur]\ar@{.>}[rr]\ar@{.>}@/_5pc/[drrr]\ar@{.>}@/^5pc/[urrr]\ar@{.>}@/^1pc/[rrrr]	
			&		&o_4\ar[dr]\ar[ur]\ar[rr]&	&o_7\\
			&o_3\ar[ur]\ar@{.>}@/_1pc/[rr]\ar@{.>}@/_/[urrr]\ar@{.>}@/_1pc/[uurr]&			&o_6	&\\
			&		&			&		&
}
\]
Note that in this diagram, we used the solid edges to denote the edges of the \emph{transitive reduction}\footnote{A \emph{transitive reduction} of a binary relation $R$ on a set $X$ is a minimal relation $R'$ on $X$ such that the transitive closure of $R'$ is the same as the transitive closure of $R$.} of the $\earlierthan$ relation. 
 \item The $\notlaterthan$ relation is represented as the following directed graph $G_2$:
\[
\xymatrix@!=2pc{
			&		&			&		&\\
			&o_2\ar[dr]\ar@{.>}@/^1pc/[rr]\ar@{.>}@/^/[drrr]\ar@{.>}@/^1pc/[ddrr]& 			
										&o_5\ar@/^/@{-->}[dr]\ar@/^/@{-->}[dd]	&\\
o_1\ar[dr]\ar[ur]\ar@{.>}[rr]\ar@{.>}@/_5pc/[drrr]\ar@{.>}@/^5pc/[urrr]\ar@{.>}@/^1pc/[rrrr]		
			&		&o_4\ar[dr]\ar[ur]\ar[rr]&		&o_7\ar@/^2pc/@{-->}[dl]\\
			&o_3\ar[ur]\ar@{.>}@/_1pc/[rr]\ar@{.>}@/_/[urrr]\ar@{.>}@/_1pc/[uurr]&	&o_6\ar@/_/@{-->}[ur]	&\\
			&		&			&		&
}
\]

Note that we used the dashed edges to denote the edges of $G_2$ which are not in $G_1$. \\
 \item The $\commutativity$ relation is represented by the following (undirected) graph $G_3$ (because $\commutativity$ is symmetric). 

\[
\xymatrix@!=2pc{
			&		&			&		&\\
			&o_2\ar@{-}[dr]\ar@{-}[dd]\ar@{-}@/^1pc/[rr]\ar@{-}@/^/[drrr]\ar@{-}@/^1pc/[ddrr]& 			&o_5	&\\
o_1\ar@{-}[dr]\ar@{-}[ur]\ar@{-}[rr]\ar@{-}@/_5pc/[drrr]\ar@{-}@/^5pc/[urrr]\ar@{-}@/^1pc/[rrrr]	
			&		&o_4\ar@{-}[dr]\ar@{-}[ur]\ar@{-}[rr]&	&o_7\\
			&o_3\ar@{-}[ur]\ar@{-}@/_1pc/[rr]\ar@{-}@/_/[urrr]\ar@{-}@/_1pc/[uurr]&			&o_6	&\\
			&		&			&		&
}
\]

Note that except the edge $\set{o_2,o_3}$, all other edges of $G_3$ are exactly the edges of the \emph{comparability graph} of the  $\earlierthan$ relation. Because of the quantity of edges the comparability graph has, it is often more practical to draw the \emph{complement graph} of the graph induced by the relation $\commutativity$. For example, the complement graph $\bar{G}_3$ of the graph $G_3$ is the following:
\[
\xymatrix@!=1pc{
			&o_2		& 			&o_5\ar@{-}[dr]\ar@{-}[dd]	&\\
o_1			&		&o_4			&				&o_7\\
			&o_3		&			&o_6\ar@{-}[ur]			&
}
\]
\EOD
\end{enumerate}
\label{ex:gso1}
\end{example}

\subsection{Observations and the $\obefore$, $\osim$ relations \label{sec:observation}}
If the relations of a gso-structure in the previous section describe the specification level (also called structural semantics) of a concurrent system, observations characterize \emph{behavioral level} of the system. The $\obefore$ (or $\osim$) relation relates two event occurrences and an observation.
\begin{align}
(\forall o_1,o_2,o)\colq{\obefore(o_1,o_2,o)\\\supset \colq{\eventocc(o_1)\wedge\eventocc(o_1)\\\wedge \observation(o)}} \label{eq:o1}\\
(\forall o_1,o_2,o)\colq{\osim(o_1,o_2,o)\\\supset \colq{\eventocc(o_1)\wedge\eventocc(o_1)\\\wedge \observation(o)}}
\label{eq:o2}
\end{align}

Each observation and the $\obefore$ relation specify a stratified order on the event occurrences as follows. Every event occurrence cannot be observed before itself with respect to any observation.
\begin{align}
(\forall o_1,o)\neg\obefore(o_1,o_1,o) \label{eq:o3}
\end{align}
The $\obefore$ is transitive with respect to any observation.
\begin{align}
(\forall o_1,o_2,o_3,o)\colq{\obefore(o_1,o_2,o)\wedge\obefore(o_2,o_3,o)\\\supset\obefore(o_1,o_3,o)} \label{eq:o4}
\end{align}
The $\osim$ relation and $\obefore$ can be derived from each other.
\begin{align}
(\forall o_1,o_2,o)\colq{\colq{\neg\obefore(o_1,o_2,o)\\\wedge\neg\obefore(o_2,o_1,o)\wedge o_1\not= o_2}\\\equiv\osim(o_1,o_2,o)} \label{eq:o5}
\end{align}
The  $\obefore$ relation on a fixed observation satisfies the stratified order property.
\begin{align}
(\forall o_1,o_2,o_3,o)\colq{\colq{\osim(o_1,o_2,o)\wedge \osim(o_2,o_3,o)}\\\supset(\osim(o_1,o_3,o)\vee o_1=o_3)}
\label{eq:o6}
\end{align}

Every observation and  the $\obefore$ relation specify a \emph{stratified order extension} of the gso-structure. 
\begin{align}
(\forall o_1,o_2)\colq{\commutativity(o_1,o_2)\\\supset(\forall o)(\obefore(o_1,o_2,o)\vee\obefore(o_2,o_1,o))} \label{eq:o7}\\
(\forall o_1,o_2)\colq{\notlaterthan(o_1,o_2)\\\supset(\forall o)\colqq{&\obefore(o_1,o_2,o)\\\vee&\osim(o_1,o_2,o)}} \label{eq:o8}
\end{align}
Axioms \eref{o7} and \eref{o8} impose the \emph{observation soundness} property of our gso-structure theory in the following sense: if $o$ is an possible observation of the system, then it must satisfy the constraints specified by the relations of the gso-structure.

We next axiomatize the \emph{observation completeness} property of our gso-structure theory. If $o_1$ and $o_2$ are simultaneous event occurrences, then there must be some observation $o$, where $o_1$ and $o_2$ are observed simultaneously. 
\begin{align}
(\forall o_1,o_2)\colq{\neg\commutativity(o_1,o_2)\supset(\exists o)\osim(o_1,o_2,o)} \label{eq:o9}
\end{align}
And if it is not the case that the event occurrence $o_1$ is not later than the event occurrence $o_2$, then there will be some observation $o$, where $o_2$ is observed earlier than $o_1$.
\begin{align}
(\forall o_1,o_2)\colq{\neg \notlaterthan(o_1,o_2)\supset(\exists o)\obefore(o_2,o_1,o)} \label{eq:o10}
\end{align}
\smallskip\\

The reason why stratified orders are used to encode observations can be explained formally in the next two propositions.\\

For any observation $o$, we define:
\begin{align*}
\lhd_o&\df\left\{(o_1,o_2):\obefore(o_1,o_2,o)\right\},\\
\frown_o&\df\left\{(o_1,o_2):\osim(o_1,o_2,o)\right\},\\
\simeq_o&\df\left\{(o_1,o_2):\colqq{&\eventocc(o_1)\wedge\eventocc(o_2)\\\wedge&(\osim(o_1,o_2,o)\vee o_1=o_2)}\right\}.
\end{align*}

\begin{proposition}
For all event occurrences $o_1$, $o_2$ and $o_3$, we have
\begin{enumerate}
 \item $o_1\simeq_o o_2$
\item $o_1\simeq_o o_2\supset o_2\simeq_o o_1$
 \item $o_1\simeq_o o_2 \wedge o_2\simeq_o o_3 \supset o_1\simeq_o o_3$
\end{enumerate}
In other words, the relation $\simeq_o$ is an equivalence relation. \label{prop:strat1}
\end{proposition}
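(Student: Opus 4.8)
The plan is to verify the three defining properties of an equivalence relation directly from the definition of $\simeq_o$, reducing symmetry and transitivity to corresponding properties of the $\osim$ relation that are already guaranteed by the axioms. For reflexivity (item~1, which should be read as $o_1\simeq_o o_1$), note that for any event occurrence $o_1$ we have $\eventocc(o_1)$ by hypothesis, while the right disjunct $o_1=o_1$ of the defining condition holds trivially; unfolding the definition of $\simeq_o$ then yields $o_1\simeq_o o_1$ with no further axiom required.

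For symmetry, the key observation is that $\osim$ is itself symmetric. Rewriting $\osim(o_1,o_2,o)$ via axiom \eref{o5} as $\neg\obefore(o_1,o_2,o)\wedge\neg\obefore(o_2,o_1,o)\wedge o_1\neq o_2$, the right-hand side is manifestly invariant under exchanging $o_1$ and $o_2$, so $\osim(o_1,o_2,o)\equiv\osim(o_2,o_1,o)$. Assuming $o_1\simeq_o o_2$, the conjuncts $\eventocc(o_1)$ and $\eventocc(o_2)$ commute, and each disjunct transfers to its swapped form (the disjunct $\osim(o_1,o_2,o)$ by the symmetry of $\osim$ just established, the disjunct $o_1=o_2$ by the symmetry of equality), which gives $o_2\simeq_o o_1$.

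For transitivity, assume $o_1\simeq_o o_2$ and $o_2\simeq_o o_3$. Since all three event-occurrence conjuncts are available by hypothesis, it suffices to establish $\osim(o_1,o_3,o)\vee o_1=o_3$. I would split into four cases according to which disjunct is witnessed in each hypothesis. The three cases that involve at least one equality are settled by substituting equals for equals (for instance, $o_1=o_2$ together with $\osim(o_2,o_3,o)$ yields $\osim(o_1,o_3,o)$, and two equalities yield $o_1=o_3$ directly). The remaining case, in which $\osim(o_1,o_2,o)\wedge\osim(o_2,o_3,o)$ holds, matches exactly the antecedent of axiom \eref{o6}, whose consequent $\osim(o_1,o_3,o)\vee o_1=o_3$ is precisely what is needed.

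The only genuinely nontrivial step is this last case of transitivity: it is where the stratified-order property of $\obefore$ on a fixed observation, encoded by axiom \eref{o6}, does the real work, since without it simultaneity need not chain. Everything else is bookkeeping with the definition of $\simeq_o$, the symmetry of $\osim$ derived from \eref{o5}, and ordinary substitution of equal event occurrences.
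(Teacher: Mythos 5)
Your proof is correct and follows essentially the same route as the paper's own (very terse) proof: reflexivity directly from the definition of $\simeq_o$, symmetry via the symmetric form of $\osim$ given by axiom \eqref{eq:o5}, and transitivity via axiom \eqref{eq:o6}, with the equality cases handled by substitution. You simply spell out the case analysis that the paper leaves implicit, including the correct reading of item~1 as reflexivity.
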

\begin{proof} 
\begin{enumerate}
\item Follows from how $\simeq_o$ is defined.
\item Follows from axiom $\eref{o5}$ and how $\simeq_o$ is defined.
\item Follows from axiom $\eref{o6}$ and how $\simeq_o$ is defined. \qed
\end{enumerate}
\end{proof}

The intuition of \propref{strat1} is that for any fixed observation $o$, we can extend the $\osim$ relation with the identity relation to construct the equivalence relation $\simeq_o$. The relation $\simeq_o$ can then be used to partition the set of event occurrences, where we can think of each equivalence class as a ``composite event occurrence'' consisting of only atomic event occurrences that are pairwise observed \emph{simultaneously} within $o$. For example, \figref{observation} shows a stratified order $\lhd_o$ induced by an observation $o$ and the $\obefore$ relation. In this case, the equivalence classes of $\simeq_o$ are the sets $\set{o_1,o_2}$, $\set{o_3}$, $\set{o_4,o_5,o_6}$, $\set{o_7,o_8}$ and $\set{o_9,o_{10}}$, where the fact that $o_1$ and $o_2$ belong to the same equivalence class means they are observed simultaneously within $o$.

\begin{figure}
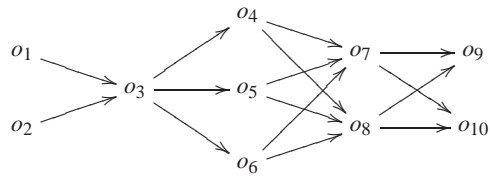

\[
\xygraph{
!{<0cm,0cm>;<1.5cm,0cm>:<0cm,1cm>::}
!~-{@{-}@[|(2)]}
!{(0,0.5)}*+{o_1}="o1" !{(0,-0.5)}*+{o_2}="o2"
!{(1,0)}*+{o_3}="o3"
!{(2,1)}*+{o_4}="o4" !{(2,0)}*+{o_5}="o5" !{(2,-1)}*+{o_6}="o6" 
!{(3,0.5)}*+{o_7}="o7" !{(3,-0.5)}*+{o_8}="o8"
!{(4,0.5)}*+{o_9}="o9" !{(4,-0.5)}*+{o_{10}}="o10"
"o1":"o3" "o2":"o3"
"o3":"o4" "o3":"o5" "o3":"o6"
"o4":"o7" "o5":"o7" "o6":"o7"
"o4":"o8" "o5":"o8" "o6":"o8"
"o7":"o9" "o7":"o10" 
"o8":"o9" "o8":"o10"
} \]
\label{fig:observation} 
\caption{A example of a stratified order $\lhd_o$ induced by an observation $o$ and the $\obefore$ relation. (Edges resulted from transitivity are omitted in this diagram for simplicity.)}
\end{figure}

\begin{proposition} If $A$ and $B$ are two distinct equivalence classes of $\simeq_o$, then either $A\times B \subseteq \lhd_o$ or $B\times A \subseteq \lhd_o$. \label{prop:strat2}
\end{proposition}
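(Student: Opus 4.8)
The plan is to reduce the statement to a \emph{representative-independence} argument: the direction in which one class precedes the other must not depend on which elements of the two classes we compare. First I would record a trichotomy. For any two \emph{distinct} event occurrences $o_1,o_2$ exactly one of $\obefore(o_1,o_2,o)$, $\obefore(o_2,o_1,o)$, $\osim(o_1,o_2,o)$ holds: the two $\obefore$-alternatives cannot hold together, since transitivity \eref{o4} would then yield $\obefore(o_1,o_1,o)$, contradicting irreflexivity \eref{o3}; and if neither $\obefore$-alternative holds then \eref{o5} forces $\osim(o_1,o_2,o)$, while \eref{o5} also shows that $\osim(o_1,o_2,o)$ excludes both $\obefore$-alternatives.

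With this in hand, fix distinct classes $A$ and $B$ (both nonempty, since $\simeq_o$ is an equivalence relation by \propref{strat1}) and pick $a\in A$, $b\in B$. Because $A\neq B$, the elements $a$ and $b$ are not $\simeq_o$-related, whence $a\neq b$ and $\neg\osim(a,b,o)$, so the trichotomy gives either $\obefore(a,b,o)$ or $\obefore(b,a,o)$. It therefore suffices to show that whenever $\obefore(a,b,o)$ holds for one pair of representatives, $\obefore(a',b',o)$ holds for \emph{every} $a'\in A$, $b'\in B$, giving $A\times B\subseteq\lhd_o$; the case $\obefore(b,a,o)$ is symmetric and yields $B\times A\subseteq\lhd_o$.

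The crux is the following substitution lemma, which I would prove first: if $\obefore(a,b,o)$, $a\simeq_o a'$, and $a'\neq b$, then $\obefore(a',b,o)$ (and symmetrically on the second coordinate). For the left version the case $a=a'$ is immediate, so assume $\osim(a,a',o)$ and, for contradiction, $\neg\obefore(a',b,o)$. Applying the trichotomy to the distinct pair $a',b$ leaves two possibilities. If $\obefore(b,a',o)$, then transitivity \eref{o4} with $\obefore(a,b,o)$ yields $\obefore(a,a',o)$, contradicting $\osim(a,a',o)$ via \eref{o5}. If instead $\osim(a',b,o)$, then $\osim(a,a',o)$ and $\osim(a',b,o)$ feed the stratified-order axiom \eref{o6}, forcing $\osim(a,b,o)$ or $a=b$; but $\obefore(a,b,o)$ excludes both, by \eref{o5} and \eref{o3} respectively. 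Hence $\obefore(a',b,o)$. The symmetric version on the second coordinate is proved the same way, using that $\osim$ is symmetric (immediate from the symmetric form of \eref{o5}).

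Finally I would assemble the result: starting from $\obefore(a,b,o)$ and arbitrary $a'\in A$, $b'\in B$, the left lemma (with $a'\neq b$, which holds since $A\neq B$) gives $\obefore(a',b,o)$, and then the right lemma (with $a'\neq b'$) gives $\obefore(a',b',o)$, so $A\times B\subseteq\lhd_o$. I expect the main obstacle to be precisely the substitution lemma, and within it the sub-case $\osim(a',b,o)$: this is the one place where a general strict partial order would not suffice and the stratified-order axiom \eref{o6} is genuinely needed. The only other point requiring care is the bookkeeping of the non-equality side conditions, all of which follow from the distinctness of $A$ and $B$.
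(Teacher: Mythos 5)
Your proof is correct, and it takes a recognizably different route from the paper's, although it is built from the same raw ingredients: the trichotomy coming from \eref{o3}--\eref{o5}, transitivity \eref{o4}, and the mutual exclusivity of $\lhd_o$ and $\frown_o$. The paper likewise picks representatives $a\in A$, $b\in B$ and reduces to the case $a\lhd_o b$, but then proceeds by a single contradiction: for arbitrary $c\in A$, $d\in B$ it assumes $\neg(c\lhd_o d)$, deduces $d\lhd_o c$, and disposes of three subcases ($a=c$, $b=d$, and $a\neq c\wedge b\neq d$) using transitivity together with disjointness of distinct classes. Notably, the paper never cites the stratified-order axiom \eref{o6} inside this proof; \eref{o6} enters only implicitly through \propref{strat1}, i.e., through the standing assumption that $A$ and $B$ are genuine equivalence classes of $\simeq_o$. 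You instead factor the extension step through a one-coordinate-at-a-time substitution lemma applied twice. This costs some length but buys modularity: the three-way subcase split disappears, each non-equality side condition is discharged exactly where it arises, and the single point where the stratified-order structure is essential is isolated and named --- though your explicit appeal to \eref{o6} in the subcase $\osim(a',b,o)$ could equally well be phrased as an appeal to transitivity of $\simeq_o$ via \propref{strat1}, which is precisely how the paper keeps \eref{o6} hidden. Both arguments are sound; the paper's is more compact, while yours makes the logical dependencies, and the reusable core of the argument, more visible.
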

\begin{proof}We pick $a\in A$ and $b\in B$. Clearly, $a\lhd_o b$ or $b\lhd_o a$, otherwise $a \frown_o b$ which contradicts that $a$,$b$ are elements from two distinct equivalence classes. There are two cases:
\begin{enumerate}
 \item If $a\lhd_o b$: we want to show $A\times B \subseteq \lhd_o$. Let $c\in A$ and  $d\in B$, it suffices to show $c\lhd_o d$. Assume for contradiction that $\neg (c\lhd_o d)$. Since $c\not\simeq_o d$, it follows that $d\lhd_o c$. There are three different subcases:
	\begin{enumerate}
 	\item If $a=c$, then $d\lhd_o a$ and $a \lhd_o b$. Hence, $d \lhd_o b$. This contradicts that $d,b\in B$.
	\item If $b=d$, then $b\lhd_o c$ and $a \lhd_o b$. Hence, $a \lhd_o c$. This contradicts that $a,c\in A$.
	\item If $a\not=c$ and $b\not=d$, then $a\frown_o c$ and $b\frown_o d$ and $\neg(a\frown_o d)$ and $\neg(c\frown_o b)$. Since $\neg(a\frown_o d)$, either $a\lhd_o d$ or $d\lhd_o a$. 
		\begin{itemize}
		\item If $a\lhd_o d$: since $d\lhd_o c$, it follows $a\lhd_o c$. This contradicts $a\frown_o c$.
		\item If $d\lhd_o a$: since $a\lhd_o b$, it follows $d\lhd_o b$. This contradicts $d\frown_o b$.
		\end{itemize}
	\end{enumerate}
 Therefore, we conclude $A\times B \subseteq \lhd_o$.
 \item If $b\lhd_o a$: using a symmetric argument, it follows that $B\times A \subseteq \lhd_o$.\qed 
\end{enumerate}
\end{proof}

\propref{strat2} leads to the following consequence. For any observation $o$, let us define the relation $\widehat{\lhd}_o$ on the set $E_o=\set{[a]_{\simeq_o}:\eventocc(a)}$ as
\begin{align*}
\widehat{\lhd}_o\df\set{(A,B):A\times B\subseteq \lhd_o \;\wedge\; A\in E_o \;\wedge\; B\in E_o}.
\end{align*}
Then the relation $\widehat{\lhd}_o$ is a \emph{strict total order} on $E_o$. Intuitively, the equivalence classes in $E_o$ can always be totally ordered using $\widehat{\lhd}_o$, where for any two equivalence classes $A$ and $B$ in $E_o$, if $A\;\widehat{\lhd}_o\; B$, then all event occurrences in $A$ are observed before all the event occurrences in $B$ within the observation $o$. 

For examples, the equivalence classes of the stratified order from \figref{observation} can be totally ordered by the ordering $\widehat{\lhd}_o$ as follows: 
\[\set{o_1,o_2}\; \widehat{\lhd}_o\; \set{o_3} \;\widehat{\lhd}_o\; \set{o_4,o_5,o_6}\;\widehat{\lhd}_o\;\set{o_7,o_8}\;\widehat{\lhd}_o\;\set{o_9,o_{10}}\]

When the cardinality of the set of event occurrences is finite as in our example, the stratified order from \figref{observation} can be equivalently represented more compactly as \[\set{o_1,o_2}\set{o_3}\set{o_4,o_5,o_6}\set{o_7,o_8}\set{o_9,o_{10}},\]
where each equivalence class is called a \emph{step} and the whole sequence is called a \emph{step sequence}. 

It might seem counterintuitive that our axioms allow observations whose infinitely many event occurrences are observed simultaneously. However, this is just a limitation of first order theory. Since our theory allows models that observe arbitrarily large finite set of simultaneous event occurrences, by the compactness theorem there will be models whose observations will allow us to observe infinite set of simultaneous event occurrences.\\

\subsubsection*{Observation soundness}
We have just discussed the idea behind why stratified orders are used to formalize the notion of an observation. We next want to show the intuition of how stratified order based observations satisfy the observation soundness properties with respect to a gso-structure. We will do so using a detailed example.

\begin{example}
\begin{sloppypar}
Given the set of event occurrences $\set{o_i\mid 1\le i \le 7}$ and the relations $\earlierthan$, $\notlaterthan$ and $\commutativity$ from \exref{gso1}, we want to know possible observations of this gso-structure. By axioms \eref{o7} and \eref{o8} for observation soundness, we know that all of the observations must satisfy all the causality constraints specified by these three relations. For each observation $\It{ob}$, we let $G_{\It{ob}}$ denote the dag representing the stratified order $\lhd_{\It{ob}}$. 
\begin{enumerate}
 \item The observation $\It{ob}$ satisfies the $\notlaterthan$ relation intuitively meaning that $G_{\It{ob}}$ must contain $G_1$, i.e., $G_1\subseteq G_{\It{ob}}$. 
 \item The observation $\It{ob}$ satisfies the $\notlaterthan$ relation \emph{roughly} which means that $G_{\It{ob}}$ might or might not contains the edges of $G_2-G_1$, where  $G_2-G_1$ denotes the graph difference of $G_2$ and $G_1$. The exception is when $G_2-G_1$ contains both directed edges $(o_i,o_j)$ and $(o_j,o_i)$, then neither $(o_i,o_j)$ nor $(o_j,o_i)$ is allowed to be included in $G_{\It{ob}}$.
 \item Finally $\It{ob}$ satisfies the $\commutativity$ relation is equivalent to saying that if 
 $\set{o_i,o_j}\in G_3$, but neither $(o_i,o_j)$ nor $(o_j,o_i)$ is in  the graph $G_1$, then we have the case that either $(o_i,o_j)$ or $(o_j,o_i)$ must be included in $G_{\It{ob}}$.
\end{enumerate}

From these intuitions, if $\earlierthan$, $\notlaterthan$ and $\commutativity$ are given an interpretation as in \exref{gso1}, then we notice the follows.
\begin{itemize}
 \item Since $(o_3,o_2)\in G_3$ and $(o_2,o_3),(o_3,o_2)\not\in G_1$, if we consider only the set of event occurrences $\set{o_1,o_2,o_3,o_4}$, then the transitive reduction graphs of all of the possible ways they can be observed are:
\[
\xymatrix{
o_1\ar[r]	&o_2\ar[r]	&o_3\ar[r]	&o_4
}
\]

\[
\xymatrix{
o_1\ar[r]	&o_3\ar[r]	&o_2\ar[r]	&o_4
}
\]

\item Since $(o_5,o_6),(o_5,o_7),(o_7,o_6),(o_6,o_7)\in G_2-G_1$, if we consider only the set of event occurrences $\set{o_4,o_5,o_6,o_7}$, then the transitive reduction graphs of all of the possible ways they can be observed are:
\[
\xymatrix{
 			&o_5			&&		& 		&o_6\\
o_4\ar[dr]\ar[ur]\ar[r]&o_7			&&o_4\ar[r]	&o_5\ar[dr]\ar[ur]	&\\
			&o_6			&&		&		&o_7
}
\]
Note that because $(o_7,o_6),(o_6,o_7)\in G_2-G_1$, the vertices $o_6$ are $o_7$ disconnected (incomparable) in all of the possible observations.
\end{itemize}

Combining all of these cases together, the transitive reduction graphs of all possible observations which satisfy the observation soundness condition with respect to the gso-structure from \exref{gso1} are depicted in \figref{stratext1}.
\begin{figure}[!h]
\begin{enumerate}
 \item[(a)] 
\[
\xymatrix{
		&		&		& 			&o_5			&\\
o_1\ar[r]	&o_2\ar[r]	&o_3\ar[r]	&o_4\ar[dr]\ar[ur]\ar[r]&o_7			&\\
		&		&		&			&o_6			&
}\]
 \item[(b)]
\[
\xymatrix{
		&		&		& 			&o_5			&\\
o_1\ar[r]	&o_3\ar[r]	&o_2\ar[r]	&o_4\ar[dr]\ar[ur]\ar[r]&o_7			&\\
		&		&		&			&o_6			&
}
\]
 \item[(c)]
\[
\xymatrix{
		&		&		&		& 		&o_6		\\
o_1\ar[r]	&o_2\ar[r]	&o_3\ar[r]	&o_4\ar[r]	&o_5\ar[dr]\ar[ur]	&			\\
		&		&		&		&		&o_7		
}
\]
 \item[(d)] 
\[
\xymatrix{
		&		&		&		& 		&o_6		\\
o_1\ar[r]	&o_3\ar[r]	&o_2\ar[r]	&o_4\ar[r]	&o_5\ar[dr]\ar[ur]	&			\\
		&		&		&		&		&o_7		
}
\]
\end{enumerate}
\caption{Transitive reduction graphs of all possible observations which satisfy the observation soundness condition with respect to the gso-structure from \exref{gso1}. \label{fig:stratext1}}
\end{figure}
\EOD
\end{sloppypar} 
\label{ex:stratext1}
\end{example}

\subsubsection*{Observation completeness}
One subtle question one might ask is if the observation completeness condition is too strong for every gso-structure to have. In other words, is there any model of our theory, where its gso-structure cannot be characterized by any set of stratified order observations? Fortunately, the theorem which we will discuss next will help us answer this question. Before stating the theorem, let us define some notations.

For a partial order $\lhd$ on a set $X$, let us define 
\begin{align*}
\lhd^{\frown}&\df\set{(x,y)\in X\times X: x\not= y \wedge \neg y\lhd x}\\
\sym{\lhd} &\df \lhd \cup \lhd^{-1}
\end{align*}

The following theorem can be seen as a generalization of Szpilrain's theorem \cite{Szp}. If Szpilrajn's Theorem ensures that every partial order can be uniquely reconstructed from the set of all of its total order extensions, then the following theorem states that every gso-structure can be uniquely reconstructed from its stratified order extensions.

\begin{theorem}[Guo and Janicki \cite{GJ}] Let \[\model{M}=(X,\earlierthan^{\model{M}},\notlaterthan^{\model{M}},\commutativity^{\model{M}})\] be a gso-structure, i.e., $\model{M}$ satisfies all axioms from \eref{gso4} to \eref{gso9}. Let $\Omega$ be the set of all stratified orders $\lhd$ on $X$ satisfying the following stratified order extension conditions:
\begin{enumerate}
\item $\commutativity^{\model{M}} \subseteq \sym{\lhd}$ and
\item $\notlaterthan^{\model{M}} \subseteq \lhd^\frown$.
\end{enumerate} 
Then we have $\commutativity^{\model{M}} = \bigcap_{\lhd\in \Omega}\sym{\lhd}$ and $\notlaterthan^{\model{M}} =\bigcap_{\lhd\in \Omega} \lhd^\frown$. \qed
\label{theo:gsoext}
\end{theorem}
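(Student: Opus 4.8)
The plan is to prove each of the two set equalities by separating the routine inclusion from the substantive one, and to reduce the substantive inclusions to a single separation lemma established by an explicit extension construction.

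For the routine inclusions $\commutativity^{\model{M}} \subseteq \bigcap_{\lhd\in\Omega}\sym{\lhd}$ and $\notlaterthan^{\model{M}} \subseteq \bigcap_{\lhd\in\Omega}\lhd^\frown$ nothing is needed beyond the definition of $\Omega$: each $\lhd\in\Omega$ satisfies $\commutativity^{\model{M}}\subseteq\sym{\lhd}$ and $\notlaterthan^{\model{M}}\subseteq\lhd^\frown$ by fiat, so both structure relations sit inside the respective intersections. The reverse inclusions I would prove contrapositively, via the following separation lemma: for $a\neq b$, (i) if $\neg\commutativity^{\model{M}}(a,b)$ then some $\lhd\in\Omega$ makes $a,b$ incomparable, i.e. $(a,b)\notin\sym{\lhd}$; and (ii) if $\neg\notlaterthan^{\model{M}}(a,b)$ then some $\lhd\in\Omega$ has $b\lhd a$, i.e. $(a,b)\notin\lhd^\frown$. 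The diagonal $a=b$ is handled trivially, since $\commutativity$ and $\notlaterthan$ are irreflexive by \eqref{eq:gso4} and \eqref{eq:gso7} and $\lhd^\frown$ excludes the diagonal by definition. Each such witness places $(a,b)$ outside the corresponding intersection, which is exactly the reverse inclusion.

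To carry out (i) and (ii) I would represent a stratified order on $X$ as a labeling $\phi\colon X\to(T,<)$ into a totally ordered set, with $x\lhd y \iff \phi(x)<\phi(y)$ and $\phi(x)=\phi(y)$ meaning ``$x,y$ lie in the same step''; under this dictionary $\lhd\in\Omega$ is equivalent to the two local conditions $\notlaterthan^{\model{M}}(x,y)\supset\phi(x)\le\phi(y)$ and $\commutativity^{\model{M}}(x,y)\supset\phi(x)\neq\phi(y)$ (whence $\earlierthan^{\model{M}}(x,y)\supset\phi(x)<\phi(y)$, using \eqref{eq:gso6}). The skeleton of the construction is: form the preorder $\preceq$ generated by $\notlaterthan^{\model{M}}$; its two-way-reachable classes are ``forced simultaneous'', and the weak transitivity \eqref{eq:gso8} collapses any such chain to a single $\notlaterthan^{\model{M}}$-edge, so \propref{gso2} shows these classes carry no $\commutativity^{\model{M}}$-edge; the relation induced on classes is then a strict partial order (transitivity and irreflexivity via \propref{gso1} and the mixed transitivity \eqref{eq:gso9}), which I would finally extend to a linear order by Szpilrajn's theorem \cite{Szp} and read back as $\phi$.

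For case (ii) this goes through cleanly: since $\neg\notlaterthan^{\model{M}}(a,b)$, there is no $\notlaterthan^{\model{M}}$-chain from $a$ to $b$ (such a chain would collapse to $\notlaterthan^{\model{M}}(a,b)$ by \eqref{eq:gso8}), so $a$ is not forced $\preceq$-below $b$; hence adjoining $[b]\lessdot[a]$ to the partial order on classes creates no cycle, and the Szpilrajn extension yields $\phi(b)<\phi(a)$ while respecting all constraints. Case (i) is the main obstacle. Forcing $\phi(a)=\phi(b)$ collapses into one step the entire $\preceq$-``interval'' of elements squeezed between $a$ and $b$, and the construction succeeds only if that interval contains no $\commutativity^{\model{M}}$-edge; ruling out such a trapped incompatible pair is precisely where the full strength of the gso-structure axioms must be invested (combining \eqref{eq:gso8}, \eqref{eq:gso9}, \propref{gso2} and \propref{gso3} to propagate $\neg\commutativity^{\model{M}}(a,b)$ outward along the interval). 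I expect this consistency check --- showing that incomparability of $a$ and $b$ at the specification level is never obstructed by a $\commutativity^{\model{M}}$-edge forced into their common step --- to be the delicate heart of the argument, after which the Szpilrajn extension and the routine verification that $\lhd\in\Omega$ complete the proof.
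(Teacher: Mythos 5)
Your attempt cannot be compared against a proof in the paper, because the paper contains none: \theoref{gsoext} is quoted from \cite{GJ} with a terminating \qed and only the remark that its proof requires the axiom of choice. Judged on its own, the routine halves of your argument are fine, and your case (ii) is correct and essentially complete: collapsing $\notlaterthan$-chains with \eref{gso8}, using \propref{gso2} to keep $\commutativity$-edges out of the two-way-reachability classes, adjoining $[b]\lessdot[a]$ (consistent because $a\not\preceq b$), and finishing with Szpilrajn all go through.

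The genuine gap is case (i), which you explicitly leave as a hope rather than an argument --- and it cannot be filled from the axioms as stated, so no amount of further work along your lines (or any lines) would close it. Concretely, take $X=\set{a,x,y,b}$, $\notlaterthan\;=\set{(a,x),(x,b),(a,y),(y,b),(a,b)}$, $\commutativity=\set{(x,y),(y,x)}$, hence $\earlierthan\;=\;\notlaterthan\cap\commutativity=\emptyset$. All of \eref{gso4}--\eref{gso9} hold: the only two-step $\notlaterthan$-chains are $a\,\notlaterthan\,x\,\notlaterthan\,b$ and $a\,\notlaterthan\,y\,\notlaterthan\,b$ and $(a,b)\in\;\notlaterthan$, while \eref{gso9} is vacuous because $\earlierthan$ is empty. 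Moreover $\Omega\neq\emptyset$ (put $a,x,y,b$ in four consecutive steps). Yet for every $\lhd\in\Omega$, writing $\phi$ for its step assignment, the extension conditions force $\phi(a)\le\phi(x)\le\phi(b)$, $\phi(a)\le\phi(y)\le\phi(b)$ and $\phi(x)\neq\phi(y)$, hence $\phi(a)<\phi(b)$, i.e.\ $a\lhd b$ in every extension. So $(a,b)\in\bigcap_{\lhd\in\Omega}\sym{\lhd}$ although $\neg\commutativity(a,b)$, and the first equation of the theorem fails. This is exactly your ``trapped $\commutativity$-edge'': the pair $x\com y$ sits inside the interval squeezed between $a$ and $b$, and it is consistent with \eref{gso8}, \eref{gso9}, \propref{gso2} and \propref{gso3} simultaneously, so no combination of them can propagate $\neg\commutativity(a,b)$ outward as you hoped. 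The conclusion to draw is that your separation lemma (i) --- and with it the theorem as rendered in this paper --- is false for structures satisfying only \eref{gso4}--\eref{gso9}; either the paper's transcription drops hypotheses present in the original Guo--Janicki notion of gso-structure, or an additional closure axiom excluding configurations like the one above is needed. Your instinct that case (i) is where ``the full strength of the gso-structure axioms must be invested'' was sound; the problem is that the stated axioms do not have that strength.
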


From this theorem, we know that there is always a subset of $\Omega$, where we can uniquely reconstruct $\notlaterthan^{\model{M}}$  and $\commutativity^{\model{M}}$. Note that although the consequence of the theorem does not mention $\earlierthan^{\model{M}}$, the axiom \eref{gso6} implies that
\[\earlierthan^{\model{M}}=\notlaterthan^{\model{M}}\cap\commutativity^{\model{M}}.\]
Thus, $\earlierthan^{\model{M}}=\Bigl(\bigcap_{\lhd\in \Omega}\sym{\lhd}\Bigr)\cap\Bigl(\bigcap_{\lhd\in \Omega} \lhd^\frown\Bigr)= \bigcap_{\lhd\in \Omega}\lhd.$ Hence, observation completeness is a safe assumption for our gso-structure theory. 

It is worth noticing that, since \theoref{gsoext} is a generalization of Szpilrajn's Theorem, the proof of \theoref{gsoext} requires \emph{the axiom of choice}.

\begin{example}
Let $\lhd_a$, $\lhd_b$, $\lhd_c$ and $\lhd_d$ be the stratified orders whose transitive reduction graphs are depicted in cases (a), (b), (c) and (d) respectively. Then the set of all the stratified order extensions of the gso-structure from \exref{gso1} is $\Omega=\set{\lhd_a,\lhd_b,\lhd_c,\lhd_d}$. However, the gso-structure from \exref{gso1} can be uniquely reconstructed from any subset of $\Omega$, which is a superset of at least  one of the following two sets $\set{\lhd_a,\lhd_d}$ and $\set{\lhd_b,\lhd_c}$. 

For example, let us consider the set $\set{\lhd_a,\lhd_d}$. Then the relations $\lhd_a^\frown$ and $\lhd_d^\frown$ can be represented as the following two graphs (some arcs which can be inferred from transitivity are omitted for simplicity):
\[
\xymatrix{
		&		&		& 			&o_5\ar@/_/[dd]\ar[dr]&\\
o_1\ar[r]	&o_2\ar[r]	&o_3\ar[r]	&o_4\ar[dr]\ar[ur]	&			&o_7\ar@/^1pc/[dl]\ar@/_1pc/[ul]\\
		&		&		&			&o_6\ar@/_/[uu]\ar[ur]			&
}\]
\[
\xymatrix{
		&		&		&		& 		&o_6\ar@/_/[dd]\\
o_1\ar[r]	&o_3\ar[r]	&o_2\ar[r]	&o_4\ar[r]	&o_5\ar[dr]\ar[ur]	&			\\
		&		&		&		&		&o_7\ar@/_/[uu]		
}
\]
It is easy to check that the graph $G_2$ is exactly the intersection of these two graphs. It is also easy to check that the graph $G_3$ is the intersection of the comparability graphs induced by the relations $\lhd_a$ and $\lhd_d$. \EOD
\label{ex:stratext2}
\end{example}

Let $\Tgso$ denote our gso-structure theory, which consists of axioms from \eref{e1} to \eref{o10}. Then we have the following theorem.

\begin{theorem}
The theory $\Tgso$ is consistent. 
\label{theo:consistent}
\end{theorem}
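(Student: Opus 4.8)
The plan is to prove consistency in the standard model-theoretic way, by constructing one structure $\model{A}$ and checking that it satisfies every axiom of $\Tgso$. The natural choice is to take a gso-structure together with its \emph{entire} family of stratified order extensions as the set of observations, so that observation completeness is built in. Concretely, I would reuse the running example: interpret $\event$, $\eventocc$ and $\observation$ so that the occurrences are $\set{o_1,\dots,o_7}$ with $\earlierthan$, $\notlaterthan$, $\commutativity$ given by the graphs $G_1$, $G_2$, $G_3$ of \exref{gso1}, let $\occurrence$ send each $o_i$ to a distinct event, and let the observations be the four stratified orders $\lhd_a,\lhd_b,\lhd_c,\lhd_d$ of \exref{stratext1}, with $\obefore(\cdot,\cdot,o)$ interpreted as $\lhd_o$ and $\osim(\cdot,\cdot,o)$ as $\lhd_o$-incomparability of distinct occurrences. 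Write $\model{M}=(\set{o_1,\dots,o_7},\earlierthan^{\model{M}},\notlaterthan^{\model{M}},\commutativity^{\model{M}})$ for the gso-reduct and $\Omega=\set{\lhd_a,\lhd_b,\lhd_c,\lhd_d}$ for the observation family.

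The verification falls into three routine groups and one substantive group. The sort axioms \eref{e1}--\eref{e5} hold by construction, since the universe is partitioned into the three kinds and $\occurrence$ is a total function from occurrences to events. The gso-structure axioms \eref{gso1}--\eref{gso9} hold because $\model{M}$ is by design a gso-structure, which is exactly what \exref{gso1} asserts. The purely structural observation axioms \eref{o1}--\eref{o6} hold because each $\lhd_o\in\Omega$ is a genuine stratified order: irreflexivity and transitivity are \eref{o3} and \eref{o4}, the derivation of $\osim$ from $\obefore$ is \eref{o5}, and \propref{strat1} together with \propref{strat2} give exactly the stratification property \eref{o6}.

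The genuinely substantive axioms are soundness, \eref{o7}--\eref{o8}, and completeness, \eref{o9}--\eref{o10}, and both reduce to \theoref{gsoext}. Soundness is immediate from the two defining conditions of $\Omega$: $\commutativity^{\model{M}}\subseteq\sym{\lhd}$ unwinds to \eref{o7}, and $\notlaterthan^{\model{M}}\subseteq\lhd^\frown$ unwinds, via \eref{o5}, to \eref{o8}. Completeness is the main obstacle, and it is precisely where \theoref{gsoext} does the work. From $\commutativity^{\model{M}}=\bigcap_{\lhd\in\Omega}\sym{\lhd}$, any distinct pair with $\neg\commutativity(o_1,o_2)$ must lie outside $\sym{\lhd}=\lhd\cup\lhd^{-1}$ for some $\lhd\in\Omega$, i.e. be incomparable there, which by \eref{o5} produces the observation witnessing \eref{o9}; dually, from $\notlaterthan^{\model{M}}=\bigcap_{\lhd\in\Omega}\lhd^\frown$, any pair with $\neg\notlaterthan(o_1,o_2)$ fails to lie in $\lhd^\frown$ for some $\lhd$, which forces $o_2\lhd o_1$ there and so witnesses \eref{o10}. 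For the concrete example this is exactly the computation carried out in \exref{stratext2}.

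Since $\model{A}$ satisfies every axiom, $\Tgso$ has a model and is therefore consistent. I expect the only delicate point to be guaranteeing that the family of observations is rich enough for completeness, which is the entire content of \theoref{gsoext}; I would emphasize that, by choosing a finite gso-structure, $\Omega$ is finite and nonempty, so the construction produces a finite model and needs no appeal to the axiom of choice, even though \theoref{gsoext} does in general.
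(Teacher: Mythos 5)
Your proposal is correct and takes essentially the same approach as the paper's own proof: both establish consistency by exhibiting a finite model assembled from the running example, with $\earlierthan$, $\notlaterthan$ and $\commutativity$ interpreted by the graphs $G_1$, $G_2$, $G_3$ of \exref{gso1}, $\occurrence$ a bijection onto a disjoint set of events, and observations interpreted as stratified order extensions with $\obefore$ and $\osim$ read off from each extension. The only divergence is that the paper's model keeps just the two observations $\It{ob}_a$ and $\It{ob}_d$, justifying observation completeness by the explicit intersection computation of \exref{stratext2}, whereas you include all four extensions of \exref{stratext1} and obtain completeness by invoking \theoref{gsoext} on the full extension set $\Omega$ --- a harmless variation, and your closing remark that the finite construction itself needs no appeal to the axiom of choice is accurate.
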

\begin{proof}
It suffices to build a model $\model{M}$ that satisfies all of these axioms.  Let $E$, $\mathit{EO}$ and $O$ be three \emph{pairwise disjoint} sets, where 
\begin{align*}
E &= \set{e_i : 1\le i\le 7}\\
\It{EO} &= \set{o_i : 1\le i\le 7}\\
O &= \set{\It{ob}_a,\It{ob}_d} 
\end{align*}

We define the universe of $\model{M}$ to be the set $U\df E\cup \mathit{EO} \cup O$. We then give the following interpretations
\begin{enumerate}
 \item $\event^{\model{M}}=E$
 \item $\eventocc^{\model{M}}=\mathit{EO}$
 \item $\observation^{\model{M}}=O$
 \item $\occurrence^{\model{M}}=\set{(e_i,o_i): 1\le i\le 7}$
 \item $\earlierthan^{\model{M}}$ is exactly the graph $G_1$ from \exref{gso1}
 \item $\notlaterthan^{\model{M}}$ is exactly the graph $G_2$ from \exref{gso1}
 \item $\commutativity^{\model{M}}$ is exactly the graph $G_3$ from \exref{gso1}
 \item $\obefore^{\model{M}}=\set{(o_{1},o_{2},ob_a): o_{1}\lhd_a o_{2}}\cup \set{(o_{1},o_{2},ob_d): o_{1}\lhd_d o_{2}}$, where $\lhd_a$ and $\lhd_d$ are relations from \exref{stratext2}.
 \item  $\osim^{\model{M}}=\set{(o_{1},o_{2},ob_a): o_{1}\frown_a o_{2}}\cup \set{(o_{1},o_{2},ob_d): o_{1}\frown_d o_{2}}$, where $\frown_a$ is the following relation
\[
\xymatrix@!=2pc{
			&o_5\ar@/_/[dl]\ar@/_/[dr]	&\\
o_6\ar@/_/[rr]\ar@/_/[ur]	&			&o_7\ar@/_/[ll]\ar@/_/[ul]
}\]
and $\frown_d$ is the following relation
\[
\xymatrix@!=2pc{
o_6\ar@/_/[rr]&&o_7\ar@/_/[ll]
}\]
\end{enumerate}
It is easy to check that axioms \eref{e1} to \eref{e5} are satisfied by this interpretation. We also see from \exref{gso1} how the interpretation of $\earlierthan$, $\notlaterthan$ and $\commutativity$ given by $G_1$, $G_2$ and $G_3$ respectively satisfies that axioms from \eref{gso1} to \eref{gso9}. It is also clear from \exref{stratext1} and \exref{stratext2} that our interpretation satisfies axioms from \eref{o1} to \eref{o10}.
\qed
\end{proof}

\section{Models of the theory $\Tgso$}
By \theoref{consistent}, we already know that $\Tgso$ is consistent, and hence the class of all models satisfying $\Tgso$ is nonempty. In this section, we will attempt to classify all the possible models of our theory $\Tgso$. For convenience, we let $\Tuniverse$ denote the theory consisting of axioms from \eref{e1} to \eref{e5}, and we let $\Tspec$ denote the specification-level theory consisting of axioms from \eref{gso1} to \eref{gso9}.

\subsection{Events and their occurrences}
The following definition will give us the classification of all models of $\Tuniverse$.
\begin{definition}
Let $\Cuniverse$ denote the class of all possible models for $\Tuniverse$. Then any model $\model{M}\in\Cuniverse$ consists of the following sets $E$, $\It{EO}$, and $O$ such that
\begin{enumerate}
 \item the universe $M$ of $\model{M}$ is $E\cup\It{EO}\cup O$
 \item $E$, $\It{EO}$, and $O$ are pairwise disjoint 
 \item $E$ is a partitioning of the set $\It{EO}$.
 \item $E=\event^\model{M}$
 \item $\It{EO}=\eventocc^\model{M}$
 \item $O=\observation^\model{M}$
 \item $\occurrence^\model{M}=\set{(x,e)\in EO\times E:x\in e}$
\end{enumerate}\EOD 
\label{def:class1}
\end{definition}

The correctness of our definition follows from the following theorem.

\begin{theorem}[Satisfiability Theorem for $\Tuniverse$] If the class $\Cuniverse$ is defined as in \defref{class1}, then for any model $\model{M}\in\Cuniverse$, we have $\model{M}\models\Tuniverse$.
\label{theo:sat1}
\end{theorem}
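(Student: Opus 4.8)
The plan is to fix an arbitrary model $\model{M}\in\Cuniverse$ and verify that each of the five axioms \eref{e1} through \eref{e5} comprising $\Tuniverse$ holds, reading off in each case the relevant clause of \defref{class1}. Because the interpretations $\event^\model{M}=E$, $\eventocc^\model{M}=\It{EO}$, $\observation^\model{M}=O$, and $\occurrence^\model{M}=\set{(x,e)\in \It{EO}\times E : x\in e}$ are given explicitly by \defref{class1}, the verification is a direct unfolding of definitions rather than a construction.

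First I would dispatch the two ``universe'' axioms. Axiom \eref{e1} is immediate from clause~1, which states that the universe $M$ equals $E\cup\It{EO}\cup O$: any element of $M$ lies in one of the three sets, hence satisfies $\event$, $\eventocc$, or $\observation$ by clauses~4--6. Axiom \eref{e2} follows from clause~2 together with clauses~4--6, since pairwise disjointness of $E$, $\It{EO}$, and $O$ is exactly pairwise disjointness of the extensions of the three unary predicates.

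Next I would handle the three axioms governing $\occurrence$. For \eref{e3}, suppose $\occurrence^\model{M}(o,e)$; by clause~7 this means $(o,e)\in \It{EO}\times E$, so $o\in\eventocc^\model{M}$ and $e\in\event^\model{M}$. For \eref{e4}, given $o\in\eventocc^\model{M}=\It{EO}$, clause~3 says $E$ partitions $\It{EO}$, so there is a block $e\in E$ with $o\in e$; then $e\in\event^\model{M}$ and $(o,e)\in\occurrence^\model{M}$ by clause~7. For \eref{e5}, if $\occurrence^\model{M}(o,e_1)$ and $\occurrence^\model{M}(o,e_2)$, then $o\in e_1$ and $o\in e_2$ with $e_1,e_2\in E$, and since the blocks of a partition are pairwise disjoint we conclude $e_1=e_2$.

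I do not expect a genuine obstacle, as the statement is essentially a soundness check that the explicit construction meets its own specification. The one point requiring care is the apparent tension between clause~2 and clause~3: clause~3 forces each element of $E$ to be a subset of $\It{EO}$, while clause~2 demands $E\cap\It{EO}=\emptyset$. These coexist precisely because a block of the partition is a \emph{set} of event occurrences and is not itself an event occurrence, so membership in $E$ and membership in $\It{EO}$ never collide; I would note this explicitly so that clauses~2 and~3 are seen to be jointly satisfiable. The only other fact used is the defining property of a partition, namely that every element of $\It{EO}$ lies in exactly one block, which is exactly what powers the existence step of \eref{e4} and the uniqueness step of \eref{e5}.
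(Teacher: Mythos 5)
Your proof is correct and takes essentially the same route as the paper's: a direct verification that each of axioms \eref{e1}--\eref{e5} is forced by the corresponding clauses of \defref{class1}, with the partition property of $E$ over $\It{EO}$ and the membership interpretation of $\occurrence$ doing the work for the three occurrence axioms. Yours is in fact slightly more careful than the paper's own (very terse) proof, which credits \eref{e1} to disjointness alone rather than to clause~1, and never explicitly addresses the existence axiom \eref{e4}; your derivation of \eref{e4} from the fact that every element of $\It{EO}$ lies in some block closes that small omission.
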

\begin{proof}
The fact that $\model{M}$ satisfies axioms \eref{e1} and \eref{e2} follows from the condition that $E$, $\It{EO}$, and $O$ are pairwise disjoint. The fact that $\model{M}$ satisfies axioms \eref{e3} and \eref{e5} follows from our construction that $E$ is a partitioning of the set $\It{EO}$ and the interpretation of $\occurrence$ as the membership relation between $EO$ and $E$.
\qed 
\end{proof}

\begin{theorem}[Axiomatizability Theorem for $\Tuniverse$] Any model  of $\Tuniverse$ is isomorphic to a structure of $\Cuniverse$.
\label{theo:axiom1} 
\end{theorem}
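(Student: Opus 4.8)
The plan is to exhibit, for an arbitrary $\model{N}\models\Tuniverse$, a structure of $\Cuniverse$ together with an explicit isomorphism onto it. The guiding idea is that in the canonical structures of \defref{class1} an event \emph{is} its set of occurrences, so I would replace each abstract event of $\model{N}$ by its occurrence set and leave occurrences and observations untouched. Concretely, writing $\It{EO}:=\eventocc^{\model{N}}$ and $O:=\observation^{\model{N}}$, I would set, for each $e\in\event^{\model{N}}$, its block $b(e):=\set{o\in\It{EO}:\occurrence^{\model{N}}(o,e)}$, put $E:=\set{b(e):e\in\event^{\model{N}}}$, and let $\model{M}$ be the $\Cuniverse$-structure on $E\cup\It{EO}\cup O$ with $\occurrence^{\model{M}}$ interpreted as membership. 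The candidate isomorphism $\phi$ is the identity on $\It{EO}\cup O$ and sends $e\mapsto b(e)$.

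First I would check that $\model{M}$ is genuinely a member of $\Cuniverse$, i.e. that $E$ is a partition of $\It{EO}$. Covering of $\It{EO}$ by the blocks is exactly axiom \eref{e4} (every occurrence lies in some event), and pairwise disjointness of distinct blocks is axiom \eref{e5} (each occurrence has a unique event): if $o\in b(e_1)\cap b(e_2)$ then $\occurrence^{\model{N}}(o,e_1)\wedge\occurrence^{\model{N}}(o,e_2)$ forces $e_1=e_2$. The same computation shows $e\mapsto b(e)$ is injective on events that have occurrences, so $\phi$ is a bijection on that part; that $\phi$ preserves all four predicates is immediate from the construction, since event/occurrence/observation status is inherited verbatim and $\occurrence^{\model{N}}(o,e)$ holds iff $o\in b(e)$ iff $\occurrence^{\model{M}}(\phi(o),\phi(e))$.

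Two technical points remain, and I expect the second to be the real obstacle. The first is pure bookkeeping: the three parts $E,\It{EO},O$ of the new universe must be pairwise disjoint, but a priori a block (a set of occurrences) could coincide with an occurrence or an observation; I would dispose of this by passing to tagged disjoint copies, which changes nothing of substance. The genuinely delicate point is that $\Tuniverse$ does not force an event to have any occurrence (no axiom is the converse of \eref{e4}), so two distinct events with empty occurrence sets both map to the empty block and $\phi$ fails to be injective there. Under the strict reading of ``partition'' this cannot be repaired, since a partition has at most one empty block while distinct $\Cuniverse$-events carry distinct nonempty occurrence sets; hence the step that needs care — and the place where the intended reading of the statement must be pinned down — is the treatment of occurrence-free events. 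The cleanest resolution is to read \defref{class1} as ranging over events that actually occur (equivalently, to admit a single tagged empty block), after which $\phi$ becomes a bijection and so, together with \theoref{sat1}, an isomorphism of $\model{N}$ onto a structure of $\Cuniverse$.
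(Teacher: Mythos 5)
Your construction is essentially the paper's own proof: the paper likewise replaces each event by its fiber under the occurrence function, defining $E\df\set{\set{x:f(x)=e}: e\in \event^{\model{M}}}$ and asserting that this is a partitioning of $\eventocc^{\model{M}}$. What you add is exactly what the paper omits, and both of your ``technical points'' are genuine gaps in the paper's argument: the paper never exhibits the isomorphism explicitly (so the issue that a block might literally coincide with an occurrence or an observation never surfaces), and it says nothing about occurrence-free events. The latter is not a pedantic worry but a counterexample to the theorem as literally stated: a model of $\Tuniverse$ whose universe consists of two events and nothing else satisfies \eref{e1}--\eref{e5} vacuously, yet it cannot be isomorphic to any structure of $\Cuniverse$, since in such a structure $E$ must be a partitioning of $\It{EO}=\emptyset$, forcing $E=\emptyset$ and hence an empty universe. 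So some repair is unavoidable, and your proposal to restrict to events that actually occur is the right one; note only that your parenthetical alternative (admitting a single tagged empty block) still fails when there are two or more occurrence-free events, so the clean formulation is that the theorem holds for models of $\Tuniverse$ augmented with the converse of \eref{e4}, namely $(\forall e)\left(\event(e)\supset(\exists o)\,\occurrence(o,e)\right)$. With that reading fixed, your proof is complete and is in fact more rigorous than the one in the paper.
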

\begin{proof}
Let $\model{M}$ be a model of $\Tuniverse$. We will show that $\model{M}$ satisfies the conditions of the structures in $\Cuniverse$ from \defref{class1}. 

Since $\model{M}\models \eref{e1}$ , we know that any element of the universe $M$ of $\model{M}$ belongs to one of the following sets $\event^\model{M}$, $\eventocc^\model{M}$ and $\observation^\model{M}$. Since $\model{M}\models \eref{e2}$,  all of these sets $\event^\model{M}$, $\eventocc^\model{M}$ and $\observation^\model{M}$ are pairwise disjoint. Hence, the conditions (1), (2), (4)--(6) are satisfied.

Since $\model{M}$ satisfied axioms \eref{e3}--\eref{e4}, we know that $\occurrence^\model{M}$ a function \[\occurrence^\model{M}: \eventocc^\model{M}\rightarrow\event^\model{M} \]
Hence, given the set $EO=\eventocc^\model{M}$, we can define the set $E$ as 
\[E\df\set{\set{x:f(x)=e}: e\in \event^\model{M}}\]
Since $\occurrence^\model{M}$ is a function, it can be easily checked that $E$ defines a partitioning of $EO$. Thus, the condition (3) and (7) are also satisfied.
\qed
\end{proof}

\subsection{Graph-theoretic classification of gso-structures}
We will classify the relational models of $\Tspec$ in a more well-understood combinatorial setting.  But before that we will recall some definitions. 

\begin{definition}
A directed graph $G$ is a pair $(V,E)$, where $V$ is the set of vertices and $E\subseteq V\times V \setminus \set{(v,v)\in V\times V}$ is the set of edges. 
\begin{itemize}
 \item The transitive closure of $G$ is a graph $G^+ = (V,E^+)$ such that for all $v,w$ in $V$ there is an edge $(v,w)$ in $E^+$ if and only if there is a nonempty path from $v$ to $w$ in $G$.
 \item The graph $G$ is called a \emph{transitive graph} if we have $E=E^+\setminus\set{(v,v)\in E^+}$. In other words, $G$ is its own transitive-closure taken away all the self-loops.
 \item We let $\CO{G}=(V,\CO{E})$ denote the \emph{comparability graph} of $G$, i.e., $$\CO{E}=\set{(u,v): (u,v)\in E}.$$
 \item We let $\ICO{G}=(V,\ICO{E})$ denote the \emph{incomparability graph} of $G$, i.e., $$\ICO{E}=\set{(u,v): (u,v)\not\in E}.$$
 \item We  let $\overline{G}=(V,\overline{E})$ denote the \emph{complement graph} of $G$, i.e., $$\overline{E}=\set{(u,v)\in V\times V:u\not=v\;\wedge\;(u,v)\not\in E}.$$ 
 In other words, we exclude the self-loops.
  \item Given a directed graph $H=(V,E')$, we write $G\subseteq H$ if $E\subseteq E'$. We write $G-H$ to denote the graph $(V,E\setminus E')$. And we write $G\cup H$ to denote the graph $(V,E\cup E')$.
\end{itemize} \EOD
\end{definition}

In this paper, we will treat undirected graphs (or graphs) as a special case of directed graph, where the edge relations are symmetric. This explains why we defined $\CO{G}$ and $\ICO{G}$ as direct graphs. Also note that whenever we call something a graph or a directed graph, we already mean that it does not contain any self-loop. \\

\begin{definition}
Let $\Cspec$ denote the class of all possible models for $\Tspec$. Then any model $\model{M}\in\Cuniverse$ can be uniquely determined from the following three graphs:
\begin{enumerate}
 \item The graph $G_1=(\It{EO},E_1)$ is a \emph{acyclic} transitive graph.
 \item The graph $G_2=(\It{EO},E_2)$ is a transitive graph satisfying the following two conditions:
	\begin{enumerate}
	 \item $G_2=G_1\cup G_3$, where $G_3\subseteq \ICO{G_1}$.
	 \item the graph $G_2$ does not contain a triangle that has any of these two forms:
\[
\xymatrix{
			&\bullet\ar@{-->}[dr]	&\\
\bullet\ar@{-->}[rr]\ar[ur]	&			&\bullet
}\mbox{\quad\quad}\xymatrix{
			&\bullet\ar[dr]	&\\
\bullet\ar@{-->}[rr]\ar@{-->}[ur]	&			&\bullet
}\]
where the solid edges are edges of $G_1$ and the dashed edges are edges of $G_3$.
	\end{enumerate}
 \item The graph $G_4=(\It{EO},E_4)$ is an undirected graph such that there is an undirected graph $G_5\subseteq \ICO{G_2}$ and $G_4=\CO{G_1}\cup G_5$.
\end{enumerate}
The interpretation for $\model{M}$ can be defined as:
\begin{itemize}
 \item the universe $M$ of $\model{M}$ is a superset of $\It{EO}$
 \item $\eventocc^\model{M} = \It{EO}$
 \item $\earlierthan^\model{M} = E_1$
\item $\notlaterthan^\model{M} = E_2$
 \item $\commutativity^\model{M} = E_4$.
\end{itemize}
\EOD
\label{def:class2} 
\end{definition}

\begin{theorem}[Satisfiability Theorem for $\Tspec$] If the class $\Cspec$ is defined as in \defref{class2}, then for any model $\model{M}\in\Cspec$, we have $\model{M}\models\Tspec$.
\label{theo:sat2}
\end{theorem}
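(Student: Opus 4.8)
The plan is to verify the nine axioms \eref{gso1}--\eref{gso9} one at a time against the graph data $(G_1,G_2,G_4)$ that determines $\model{M}$, exploiting throughout the standing conventions that every graph in play is loop-free and that an undirected graph is a symmetric edge relation. I read $\CO{G_1}$ and $\ICO{G_1},\ICO{G_2}$ as the (symmetric) comparability and incomparability relations of the respective orders; this reading is in fact forced, since $G_4=\CO{G_1}\cup G_5$ is required to be undirected while $\commutativity$ must come out symmetric. Most of the axioms are immediate from these conventions, so I would clear them first and isolate \eref{gso6} and \eref{gso9} as the only substantial obligations. Note that the derived \propref{gso1}--\ref{prop:gso3} are consequences of the axioms and are not part of $\Tspec$, so they need not be checked here.

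The field axioms \eref{gso1}--\eref{gso3} hold because $E_1,E_2,E_4\subseteq\It{EO}\times\It{EO}$ and $\eventocc^\model{M}=\It{EO}$, so every related pair consists of event occurrences. Irreflexivity of $\commutativity$ in \eref{gso4} and of $\notlaterthan$ in \eref{gso7} is the loop-free convention applied to $G_4$ and $G_2$, and symmetry of $\commutativity$ in \eref{gso5} is exactly the undirectedness of $G_4$. The weak transitivity \eref{gso8} is nothing more than transitivity of $G_2$: if $(o_1,o_2),(o_2,o_3)\in E_2$ then $(o_1,o_3)\in E_2^+$, and when $o_1\neq o_3$ this pair is not a loop, hence already lies in $E_2=E_2^+\setminus\set{(v,v)}$.

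Next I would prove \eref{gso6}, i.e. $E_1=E_2\cap E_4$. For $E_1\subseteq E_2\cap E_4$ use $E_2=E_1\cup E_3\supseteq E_1$ together with $E_1\subseteq\CO{G_1}\subseteq E_4$. For the reverse inclusion, take $(o_1,o_2)\in E_2\cap E_4$ with $(o_1,o_2)\notin E_1$; then $E_2=E_1\cup E_3$ forces $(o_1,o_2)\in E_3\subseteq\ICO{G_1}$, so $o_1,o_2$ are incomparable in $G_1$. Reading off $(o_1,o_2)\in E_4=\CO{G_1}\cup E_5$: membership in $\CO{G_1}$ would make $o_1,o_2$ comparable in $G_1$, contradicting incomparability, while membership in $E_5\subseteq\ICO{G_2}$ would give $(o_1,o_2)\notin E_2$, contradicting $(o_1,o_2)\in E_3\subseteq E_2$. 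Hence $(o_1,o_2)\in E_1$, establishing \eref{gso6}. This is precisely the step where the symmetric reading of $\CO{G_1}$ and $\ICO{G_1}$ is needed.

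The crux is \eref{gso9}, and it is here that the two forbidden triangles of \defref{class2} do their work. Consider the disjunct $\earlierthan(o_1,o_2)\wedge\notlaterthan(o_2,o_3)$, i.e. $(o_1,o_2)\in E_1$ and $(o_2,o_3)\in E_2$. I would first rule out $o_1=o_3$: in that case $(o_3,o_2)\in E_1$ while $(o_2,o_3)\in E_2=E_1\cup E_3$, and the branch $(o_2,o_3)\in E_1$ violates acyclicity of $G_1$, whereas $(o_2,o_3)\in E_3\subseteq\ICO{G_1}$ forces $o_2,o_3$ incomparable and so contradicts $(o_3,o_2)\in E_1$. With $o_1\neq o_3$ in hand, transitivity of $G_2$ (via $E_1\subseteq E_2$) places $(o_1,o_3)\in E_2=E_1\cup E_3$; if $(o_2,o_3)\in E_1$ then transitivity of $G_1$ already yields $(o_1,o_3)\in E_1$, and otherwise $(o_2,o_3)\in E_3$, so the triple $o_1,o_2,o_3$ exhibits a solid edge $o_1\to o_2$ followed by a dashed edge $o_2\to o_3$, and the corresponding forbidden-triangle condition excludes $(o_1,o_3)\in E_3$, leaving $(o_1,o_3)\in E_1$. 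The disjunct $\notlaterthan(o_1,o_2)\wedge\earlierthan(o_2,o_3)$ is symmetric and invokes the other forbidden triangle. I expect this to be the main obstacle: the argument must route the conclusion through the transitive closure of $G_2$, discard the dashed alternative using the correctly oriented triangle, and separately dispose of the degenerate coincidence $o_1=o_3$ via acyclicity of $G_1$; the remaining axioms are routine bookkeeping about loop-free, symmetric edge relations.
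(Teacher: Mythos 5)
Your proof is correct and follows essentially the same route as the paper's: routine verification of \eref{gso1}--\eref{gso5}, \eref{gso7}, \eref{gso8} from the loop-free/symmetric/transitive graph conventions, the identity $E_1=E_2\cap E_4$ (the paper's \lemref{l1}) established via $G_5\subseteq\ICO{G_2}$ and $G_3\subseteq\ICO{G_1}$ for \eref{gso6}, and a case split on $E_2=E_1\cup E_3$ with transitivity of $G_2$ and the forbidden triangles for \eref{gso9}. If anything, you are slightly more careful than the paper, which silently passes over the degenerate case $o_1=o_3$ in \eref{gso9} (where transitivity of the loop-free $G_2$ yields nothing); your explicit disposal of it via acyclicity of $G_1$ and $E_3\subseteq\ICO{G_1}$ closes that small gap.
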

\begin{proof} Since $\earlierthan^\model{M}$, $\notlaterthan^\model{M}$ and $\commutativity^\model{M}$ are exactly the edge relations of $G_1$,  $G_2$ and $G_3$ respectively, it follows that $\model{M}$ satisfies axioms \eref{gso1}--\eref{gso3}. 

Since $\commutativity^\model{M} = E_4$ and  $G_4$ is a graph, it follows that $\model{M}$ satisfies axioms \eref{gso4} and \eref{gso5}.
 
Recall that we define $\earlierthan^\model{M} = E_1$ and $\notlaterthan^\model{M} = E_2$. Hence, to show that $\model{M}\models\eref{gso6}$, it suffices to show the following lemma.
\begin{itemize}
 \item[] 
\begin{lemma}
 $G_1=G_2\cap G_4$. 
\label{lem:l1}
\end{lemma}
\begin{proof}[Proof of \lemref{l1}]
($\subseteq$) From \defref{class2}, we know that  $G_4=\CO{G_1}\cup G_5$ and $G_2=G_1\cup G_3$. Hence, it follows that $\CO{G_1}\cap G_1\subseteq G_4\cap G_1$. But we know that  $G_1=\CO{G_1}\cap G_1$.\\

($\supseteq$) It suffices to show that $G_5\cap G_2 = \emptyset$ and $G_3\cap G_4 = \emptyset$. But we know that $G_5\cap G_2 = \emptyset$ since from condition (3) of \defref{class2}, we have $G_5\subseteq \ICO{G_2}$. This also implies that that $G_3\cap G_5=\emptyset$. It remains to show that $G_3\cap\CO{G_1}=\emptyset$, but this holds since from condition (2)(a) of \defref{class2} we have $G_3\subseteq \ICO{G_1}$.
\qed
\end{proof}
\end{itemize}

Since $G_2$ is a transitive graph, it follows that $\model{M}$ satisfies axioms \eref{gso7} and \eref{gso8}. It remains to show that $\model{M}\models\eref{gso9}$. Then since $G_2=G_1\cup G_3$, there are three cases to consider:
\begin{itemize}
 \item If $(o_1,o_2)\in E_1$ and  $(o_2,o_3)\in E_1$, then it follows that $(o_1,o_3)\in E_1$ since $G_1$ is a transitive graph.
 \item If $(o_1,o_2)\in E_3$ and $(o_2,o_3)\in E_1$, where $E_3$ is the set of edges of $G_3$, then since $G_2$ is a transitive graph, we know that $(o_1,o_3)\in E_2$. Suppose for a contradiction that $(o_1,o_3)\in E_3$, then we have a triangle
\[\xymatrix{
			&o_2\ar[dr]	&\\
o_1\ar@{-->}[rr]\ar@{-->}[ur]	&			&o_3
}\]
This is a contradiction.
 \item The case of $(o_1,o_2)\in E_1$ and $(o_2,o_3)\in E_3$ is similar to the previous case.
\end{itemize}
\qed
\end{proof}

\begin{theorem}[Axiomatizability Theorem for $\Tspec$] Any model  of $\Tspec$ is isomorphic to a structure of $\Cspec$.
\label{theo:axiom2} 
\end{theorem}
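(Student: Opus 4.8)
The plan is to take an arbitrary model $\model{M}\models\Tspec$ and show that the data it already carries on the event occurrences meets every requirement of \defref{class2}. Since that definition leaves the universe and all symbols other than $\eventocc,\earlierthan,\notlaterthan,\commutativity$ completely free, no genuine reconstruction of an isomorphism is needed: membership of $\model{M}$ in $\Cspec$ will follow, with the identity serving as the required isomorphism. Concretely, I would set $\It{EO}\df\eventocc^\model{M}$ and read off the three graphs $G_1\df(\It{EO},\earlierthan^\model{M})$, $G_2\df(\It{EO},\notlaterthan^\model{M})$ and $G_4\df(\It{EO},\commutativity^\model{M})$. Axioms \eref{gso1}--\eref{gso3} guarantee that all three relations live on $\It{EO}$, so these are honest graphs on the vertex set $\It{EO}$, and the whole proof reduces to verifying conditions (1), (2) and (3) of \defref{class2} for this triple.

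For condition (1), \propref{gso1} says $\earlierthan^\model{M}$ is a strict partial order, so $G_1$ is acyclic, and transitivity together with acyclicity gives $E_1^+=E_1$, i.e.\ $G_1$ is a transitive graph. For condition (2) I would first argue that $G_2$ is a transitive graph: irreflexivity is \eref{gso7}, and to see $E_2=E_2^+\setminus\set{(v,v)}$ I would shortcut any directed path $v_0\to v_1\to\cdots\to v_k$ with $v_0\ne v_k$ by induction on $k$, applying \eref{gso8} to $v_0\to v_1\to v_2$ when $v_0\ne v_2$ and simply deleting the detour $v_0\to v_1\to v_2$ when $v_0=v_2$ (the case where $\notlaterthan$ has a $2$-cycle). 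I would then put $G_3\df G_2-G_1$; the inclusion $G_1\subseteq G_2$ is immediate from \eref{gso6}, which gives $\earlierthan^\model{M}\subseteq\notlaterthan^\model{M}$, so $G_2=G_1\cup G_3$, and $G_3\subseteq\ICO{G_1}$ holds trivially since the edges of $G_3$ are by construction non-edges of $G_1$.

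The forbidden-triangle clause (2)(b) is precisely the contrapositive of \eref{gso9}: in either prohibited triangle two of the edges witness a hypothesis of \eref{gso9} (one $\earlierthan$ edge and one $\notlaterthan$ edge in series), forcing the third edge to lie in $\earlierthan^\model{M}$, whereas that edge is drawn dashed and hence lies in $G_3=G_2-G_1$ outside $\earlierthan^\model{M}$ --- a contradiction. For condition (3), $G_4$ is undirected by the symmetry axiom \eref{gso5}; the comparability graph $\CO{G_1}$ is contained in $G_4$ because $\earlierthan^\model{M}\subseteq\commutativity^\model{M}$ by \eref{gso6} and $\commutativity^\model{M}$ is symmetric; and setting $G_5\df G_4-\CO{G_1}$ gives $G_4=\CO{G_1}\cup G_5$. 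To finish I check $G_5\subseteq\ICO{G_2}$: if $(u,v)\in G_5$ then $\commutativity(u,v)$ holds while $\neg\earlierthan(u,v)$, so $\notlaterthan(u,v)$ would force $\earlierthan(u,v)$ through \eref{gso6}, a contradiction; hence $(u,v)\notin\notlaterthan^\model{M}$, i.e.\ $(u,v)\in\ICO{G_2}$. With all three conditions verified, $\model{M}$ is a structure of $\Cspec$.

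I expect the only non-bookkeeping step to be the verification that $\notlaterthan^\model{M}$ is a transitive graph in the strict sense of the definition. The axioms supply only the \emph{weak} transitivity \eref{gso8}, with its $o_1\ne o_3$ side condition, and because $\notlaterthan$ may contain $2$-cycles its transitive closure genuinely acquires self-loops; so some care is needed to show that every transitively implied \emph{non-loop} edge is already present. The path-shortcutting induction above handles this, but it is the one place where one cannot simply invoke a previously stated proposition.
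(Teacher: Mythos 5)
Your overall route is the same as the paper's: take the identity interpretation, read off $G_1$, $G_2$, $G_4$ from $\earlierthan^\model{M}$, $\notlaterthan^\model{M}$, $\commutativity^\model{M}$ on the vertex set $\It{EO}=\eventocc^\model{M}$, and verify conditions (1)--(3) of \defref{class2}. Your handling of (1), (2)(b) and (3) matches the paper's proof (for (3) you even supply the inclusion $\CO{G_1}\subseteq G_4$ via \eref{gso6} and symmetry, which the paper leaves implicit), and your path-shortcutting induction showing that $\notlaterthan^\model{M}$ really is a transitive graph despite the $o_1\neq o_3$ proviso of \eref{gso8} is a genuine improvement: the paper merely asserts this.

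There is, however, a real gap at condition (2)(a). You dismiss $G_3\subseteq\ICO{G_1}$ as ``trivial, since edges of $G_3=G_2-G_1$ are non-edges of $G_1$.'' That is only valid if $\ICO{G_1}$ is read literally as the set of non-edges of $G_1$; but the incomparability graph is meant to consist of the pairs with \emph{neither} orientation an edge of $G_1$, and this stronger reading is the one the classification actually depends on: the companion satisfiability proof deduces $G_3\cap\CO{G_1}=\emptyset$ from (2)(a) in \lemref{l1} (which is what makes axiom \eref{gso6} hold in the constructed models), and your own verification of condition (3) silently uses the symmetric-closure reading of $\CO{G_1}$ (you infer $\CO{G_1}\subseteq G_4$ from $\earlierthan^\model{M}\subseteq\commutativity^\model{M}$ plus symmetry; under the literal reading your $G_5=G_4-\CO{G_1}$ would not even be undirected). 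So you cannot consistently fall back on the literal definitions. Under the intended reading, (2)(a) requires showing, for every $(u,v)\in G_3$, that also $(v,u)\notin E_1$, and this is not bookkeeping: nothing in the axioms governing $\notlaterthan$ alone (\eref{gso7}, \eref{gso8}) excludes $\notlaterthan^\model{M}(u,v)$ coexisting with $\earlierthan^\model{M}(v,u)$. This is exactly where the paper invokes \propref{gso3}: if $(v,u)\in E_1$ and $(u,v)\in E_2$, then $\earlierthan^\model{M}(v,u)$ and $\notlaterthan^\model{M}(u,v)$ hold together, contradicting \propref{gso3} (equivalently, axiom \eref{gso9} plus irreflexivity of $\earlierthan$). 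Adding that one argument closes the gap.
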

\begin{proof}
Let $\model{M}$ be a model of $\Tspec$. We will show that $\model{M}$ satisfies the conditions of the structures in $\Cspec$ from \defref{class2}. 

Since $\model{M}$ satisfies axioms \eref{gso1} and \eref{gso2}, we know that we can determine the vertex set $EO=\eventocc^\model{M}$ for the graphs $G_1$, $G_2$ and $G_3$.

Since $\model{M}$ satisfies all axioms, from \propref{gso1} we know that $\earlierthan^\model{M}$ is a strict partial order, so it can be represented by an acyclic transitive graph $G_1$ as from the condition~{(1)} of \defref{class2}.

Since $\model{M}$ satisfies axioms \eref{gso7} and \eref{gso8}, we can represent the $\notlaterthan^\model{M}$ relation by a transitive graph $G_2$ as from the condition~{(2)} of \defref{class2}. 
\begin{itemize}
 \item To show that the condition (2)(a) is satisfied, we must show that $G_3=G_2-G_1 \subseteq \ICO{G_1}$. Suppose for a contradiction that there is an edge $(u,v)$ that appears on both  $G_3$ and $\ICO{G_1}$. Since $G_3=G_2-G_1$, we know that $(u,v)\not\in E_1$, so $(v,u)\in E_1$. This would mean that $\earlierthan^\model{M}(v,u)$ and $\notlaterthan^\model{M}(u,v)$. But this contradicts with \propref{gso3}. 
 \item To show that the condition (2)(b) is satisfied, we assume for a contradiction that we have at least one of the following two triangles:
\[\xymatrix{
			&v\ar@{-->}[dr]	&\\
u\ar@{-->}[rr]\ar[ur]	&			&w
}\mbox{\quad\quad}\xymatrix{
			&v\ar[dr]	&\\
u\ar@{-->}[rr]\ar@{-->}[ur]	&			&w
}\]
where the solid edges are edges of $G_1$ and the dashed edges are edges of $G_3$. The left triangle implies that $\earlierthan^\model{M}(u,v)$ and $\notlaterthan^\model{M}(v,w)$ but $\notlaterthan^\model{M}(u,w)$. This contradicts with axiom \eref{gso9}. Similarly the case of the right triangle also leads to a contradiction.
\end{itemize}

Since $\model{M}$ satisfies axioms \eref{gso4} and \eref{gso5}, we can represent $\commutativity^\model{M}$ by a graph $G_4$ as from the condition~{(3)} of \defref{class2}. Let $G_5=G_4-\CO{G_1}$, it remains to show that $G_5\subseteq \ICO{G_2}$. Suppose for a contradiction that an edge $(u,v)$ and $(v,u)$ is shared by both the graph $G_5$ and $\CO{G_2}$. Without loss of generality, we can assume that $(u,v)\in G_2$. Thus, $\notlaterthan^\model{M}(u,v)$ and $\commutativity^\model{M}(u,v)$. But by axiom \eref{gso6}, we have that $\earlierthan^\model{M}(u,v)$. This contradicts with our assumption that $(u,v)\in G_5=G_4-\CO{G_1}$. \qed
\end{proof}

\subsection{Observations}
We first introduce a more combinatorial representation of stratified orders.
\begin{definition}
Given a set $X$, we call the pair $(P,\tot)$ a \emph{ranking structure} of $X$ if $P$ is a partitioning of the set $X$ and $\tot$ is a total ordering on the set $P$. \EOD
\end{definition}
Intuitively, a ranking structure of $X$ is just a partitioning $P$ of $X$ equipped with a total ordering which orders the partitions in $P$.

\begin{proposition}
Any stratified order $\lhd$ on a set $X$ can be uniquely determined by a \emph{ranking structure} of $X$.
\end{proposition}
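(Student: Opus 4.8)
The plan is to exhibit an explicit bijection between stratified orders on $X$ and ranking structures of $X$, reusing almost verbatim the construction already carried out for observation-induced orders in \propref{strat1} and \propref{strat2}. The phrase ``uniquely determined'' will be read as the statement that the assignment $\lhd\mapsto(P,\tot)$ is a bijection, so I would produce a map in each direction and check that they are mutually inverse.

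First, given a stratified order $\lhd$ on $X$, I would define the simultaneity relation $\frown\ \df\ \set{(x,y):x\not=y\wedge\neg(x\lhd y)\wedge\neg(y\lhd x)}$ and set $\simeq\ \df\ \frown\cup\set{(x,x):x\in X}$. By the defining property of a stratified order, namely the abstract form of axiom \eref{o6}, the relation $\simeq$ is an equivalence relation, exactly as established in \propref{strat1}; let $P\df X/\!\simeq$ be the induced partitioning of $X$. I would then order the blocks by declaring $A\tot B$ iff $A\times B\subseteq\,\lhd$. By \propref{strat2} any two distinct blocks $A,B$ satisfy $A\times B\subseteq\,\lhd$ or $B\times A\subseteq\,\lhd$, and irreflexivity and transitivity of $\lhd$ transfer to $\tot$ (if $A\tot B$ and $B\tot A$ one obtains $a\lhd a$; if $A\tot B$ and $B\tot C$ then trichotomy forces $A\tot C$). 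Hence $\tot$ is a strict total order on $P$, as already noted in the discussion of $\widehat{\lhd}_o$ following \propref{strat2}, and $(P,\tot)$ is a ranking structure. This defines the forward map $\Phi:\lhd\mapsto(P,\tot)$.

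For the converse map $\Psi$, given a ranking structure $(P,\tot)$ I would set $x\lhd' y$ iff $[x]\tot[y]$, where $[x]$ is the unique block of $P$ containing $x$, and verify that $\lhd'$ is a stratified order. Irreflexivity and transitivity follow from $\tot$ being a strict total order, and the incomparability relation of $\lhd'$ is precisely ``$x\not=y$ and $[x]=[y]$'' (using totality of $\tot$), whose reflexive closure is the equivalence relation with classes exactly the blocks of $P$; this is the stratified order property. Finally I would check that $\Phi$ and $\Psi$ are mutually inverse: starting from $\lhd$, the $\simeq$-classes are the blocks returned by $\Phi$, and $\Psi$ then puts $x\lhd' y$ iff $[x]\times[y]\subseteq\,\lhd$, which for points in distinct classes coincides with $x\lhd y$ by \propref{strat2}, so $\Psi\circ\Phi=\mathrm{id}$; conversely the $\simeq$-classes of $\lhd'=\Psi(P,\tot)$ are the blocks of $P$ and $A\times B\subseteq\,\lhd'$ iff $A\tot B$, so $\Phi\circ\Psi=\mathrm{id}$.

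I expect the only genuinely new work to be the verification that $\Psi(P,\tot)$ satisfies the stratified order property, since \propref{strat2} and \propref{strat1} already supply the substantive combinatorial content for the forward direction; everything else is an unwinding of the definitions of $\tot$ and $[\,\cdot\,]$, with care taken that the orientation convention $A\tot B\iff A\times B\subseteq\,\lhd$ makes the two round trips return the original data rather than its reversal.
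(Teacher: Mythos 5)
Your proposal is correct and follows essentially the same route as the paper's own proof: form the blocks $P$ as the incomparability classes (via \propref{strat1}), order them by $A \tot B \iff A\times B \subseteq\, \lhd$ (via \propref{strat2}), and recover $\lhd$ as the union of products of $\tot$-related blocks. You are somewhat more thorough than the paper, which stops at stating the recovery formula, whereas you also verify that an arbitrary ranking structure yields a stratified order and that both round trips are the identity; but this is a completion of detail within the same construction, not a different argument.
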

\begin{proof}
Similarly to the ideas from \propref{strat1} and \propref{strat2}, we define an equivalence relation from the stratified order $\lhd$ as follows:
\[\simeq_\lhd\df \set{(x,y)\in X\times X : \neg x\lhd y \wedge \neg y\lhd x \wedge x\not=y}\]
Then let $P$ be the set of all partitions of $X$ with respect to this equivalence relation $\simeq_\lhd$. 

Next we define the relation $\tot$ as $\tot\df\set{(A,B)\in P \times P:A\times B\subseteq \lhd}$. Then, similarly to \propref{strat2}, we can check that $\tot$ is a total ordering.\\

To recover the stratified order $\lhd$ from the ranking structure $(P,\tot)$, we simply reconstruct \[\lhd=\bigcup\set{X\times Y : X\not=Y \wedge X \tot Y}.\]
\qed
\end{proof}

For a set $A$, we let $K(A)$ denote the complete graph induced by $A$. In other words, $K(A)=(A,E)$ and \[E\df\set{(u,v)\in A\times A:u\not=v}.\]

For each ranking structure $R=(P,\tot)$ of a set $X$, we have two kinds of graph associated with it:
\begin{align*}
 \G{R} &= (X,E), \text{ where } E = \bigcup\set{X\times Y : X\not=Y \wedge X \tot Y}\\
 \GI{R} &= \G{R}\cup \bigcup_{A\in P} K(A)
\end{align*}

Intuitively, the graph $\G{R}$ is simply the transitive graph of the stratified order encoded by $R$. And the graph $\bigcup_{A\in P} K(A)$ is exactly the graph $\ICO{\G{R}}$, but in this case it is more intuitive to characterize it as the union of complete graphs.\\

Putting everything together we have the following characterization  of the class of all models of $\Tgso$.
\begin{definition} Let $\Cgso$ denote the class of all possible models for $\Tgso$. Then any model $\model{M}\in\Cgso$ is uniquely determined from 
\begin{itemize}
 \item the sets $E$, $\It{EO}$, and $O$
 \item the graphs $G_1$, $G_2$ and $G_3$
 \item a family $F$ of ranking structures on $\It{EO}$ indexed by the set $O$, i.e., $F=\set{R_o: o \in O}$, 
\end{itemize}
such that
\begin{enumerate}
 \item all conditions from \defref{class1} are satisfied
 \item all conditions from \defref{class2} are satisfied
 \item the graph $G_2$ is the intersection of all the graphs in the set $\set{\GI{R_o}: o \in O}$
 \item the graph $G_3$ is the intersection of all the graphs in the set $\set{\CO{\G{R_o}}: o \in O}$
 \item $\obefore^\model{M}=\set{(x,y,o) : (x,y) \text{ is an edge of the graph } \G{R_o}}$
 \item $\osim^\model{M}=\set{(x,y,o) : (x,y) \text{ is an edge of the graph } \bigcup_{A\in R_o} K(A)}$
\end{enumerate}\EOD 
\label{def:class3}
\end{definition}

\begin{theorem}[Satisfiability Theorem for $\Tgso$] If the class $\Cgso$ is defined as in \defref{class3}, then for any model $\model{M}\in\Cgso$, we have $\model{M}\models\Tgso$.
\label{theo:sat3}
\end{theorem}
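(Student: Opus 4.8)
The plan is to split $\Tgso$ into its three constituent blocks and dispatch them separately. The axioms \eref{e1}--\eref{e5} constitute $\Tuniverse$, the axioms \eref{gso1}--\eref{gso9} constitute $\Tspec$, and the remaining axioms \eref{o1}--\eref{o10} govern the $\obefore$ and $\osim$ relations. Conditions (1) and (2) of \defref{class3} say exactly that $\model{M}$ meets the structural requirements of \defref{class1} and \defref{class2}, so \theoref{sat1} and \theoref{sat2} immediately give $\model{M}\models\Tuniverse$ and $\model{M}\models\Tspec$. This reduces the theorem to verifying the observation axioms \eref{o1}--\eref{o10}, which is where conditions (3)--(6) enter.

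The first step for the observation block is to record, for each fixed $o\in O$, the combinatorial shape of the two graphs attached to the ranking structure $R_o=(P,\tot)$. Since $P$ partitions $\It{EO}$ and $\tot$ totally orders $P$, the graph $\G{R_o}$ is the transitive, self-loop-free graph of a stratified order on $\It{EO}$, and $\bigcup_{A\in P}K(A)$ coincides with $\ICO{\G{R_o}}$, i.e.\ it connects precisely the distinct pairs lying in a common block of $P$. With $\obefore^\model{M}$ and $\osim^\model{M}$ read off these graphs as in conditions (5)--(6), axioms \eref{o1}--\eref{o2} hold because every vertex is an event occurrence, \eref{o3} holds because $\G{R_o}$ has no self-loops, and \eref{o4} holds because $\G{R_o}$ is transitive. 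Axiom \eref{o5} is the statement that incomparability in $\G{R_o}$ for distinct vertices is the same as lying in a common block, which is immediate from the previous identification; and \eref{o6} is the transitivity of the block relation, which follows exactly as in the arguments of \propref{strat1} and \propref{strat2}.

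For the soundness axioms I would invoke conditions (3) and (4) in the forward direction. If $\notlaterthan(o_1,o_2)$ holds then $(o_1,o_2)$ lies in $G_2=\bigcap_{o\in O}\GI{R_o}$, hence in each $\GI{R_o}=\G{R_o}\cup\bigcup_{A}K(A)$; thus in every observation $o$ the pair is either an edge of $\G{R_o}$ or a common-block pair, which is precisely $\obefore(o_1,o_2,o)\vee\osim(o_1,o_2,o)$, giving \eref{o8}. Symmetrically, $\commutativity(o_1,o_2)$ puts $(o_1,o_2)$ in $G_3=\bigcap_{o\in O}\CO{\G{R_o}}$, so in every $o$ the pair is comparable in $\G{R_o}$, i.e.\ $\obefore(o_1,o_2,o)\vee\obefore(o_2,o_1,o)$, giving \eref{o7}.

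The completeness axioms \eref{o9}--\eref{o10} are the crux, and I expect them to be the main obstacle, since they are existential and must extract a witnessing observation from a failure of membership in an intersection. Here I would argue contrapositively: if $(o_1,o_2)\notin G_3=\bigcap_{o}\CO{\G{R_o}}$ then some $o$ has $(o_1,o_2)\notin\CO{\G{R_o}}$, so $o_1,o_2$ are incomparable in $\G{R_o}$ and lie in a common block, which is exactly the witness $\osim(o_1,o_2,o)$ demanded by \eref{o9}; likewise, if $(o_1,o_2)\notin G_2$ then some $\GI{R_o}$ omits the pair, and the trichotomy of the stratified order $\G{R_o}$ (not an edge and not a common-block pair forces the reverse edge) yields $\obefore(o_2,o_1,o)$, the witness for \eref{o10}. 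The delicate points to check are the distinctness and self-loop side conditions implicit in these axioms, so that the block/trichotomy translation is applied only to genuine distinct pairs of event occurrences; once those are handled, all ten observation axioms are verified and the theorem follows.
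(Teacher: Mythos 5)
Your proposal is correct and takes essentially the same route as the paper's proof: it delegates axioms \eref{e1}--\eref{e5} and \eref{gso1}--\eref{gso9} to \theoref{sat1} and \theoref{sat2}, verifies \eref{o1}--\eref{o6} from the fact that each $\G{R_o}$ is a transitive, loop-free stratified-order graph whose incomparability graph is $\bigcup_{A\in P}K(A)$, and derives \eref{o7}--\eref{o10} from the intersection conditions (3) and (4) of \defref{class3}. The only difference is presentational: the paper compresses \eref{o7}--\eref{o10} into two biconditionals declared equivalent to those conditions, while you unpack the subset direction (soundness) and the contrapositive, witness-extracting direction (completeness) separately --- and you also explicitly flag the distinctness/self-loop side conditions that the paper leaves implicit.
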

\begin{proof} The fact that $\model{M}$ satisfies axioms \eref{e1} and \eref{e5} follows from the \theoref{sat1}. The fact that $\model{M}$ satisfies axioms \eref{gso1} and \eref{gso9} follows from the \theoref{sat2}.

Since each $R_o$ is a ranking structure on $EO$, from the way $\obefore^\model{M}$ and $\osim^\model{M}$ are defined, we know that $\model{M}$ satisfies axioms \eref{o1} and \eref{o2}.

Since $\osim^\model{M}$ is defined from the graphs $\bigcup_{A\in R_o} K(A)$ and each graph $\bigcup_{A\in R_o} K(A)$ is the incomparability graph of $\G{R_o}$, it follows that $\model{M}\models\eref{o5}$. Also since we construct the $\obefore^\model{M}$ relation from the graphs $\G{R_o}$ and each $\G{R_o}$ is a stratified order. Hence,  $\model{M}$ satisfies axioms \eref{o3}, \eref{o4} and \eref{o6} since these axioms are the conditions saying that $\lhd_o$ is a stratified order for every $o$ and we have $\G{R_o}=\lhd_o$. 

Recall axioms \eref{o7}-\eref{o10} together say that 
\begin{align}
(\forall o_1,o_2)\colq{\commutativity(o_1,o_2)\\\equiv(\forall o)(\obefore(o_1,o_2,o)\vee\obefore(o_2,o_1,o))} \label{eq:sat3.1} \\
(\forall o_1,o_2)\colq{\notlaterthan(o_1,o_2)\\\equiv(\forall o)\colqq{&\obefore(o_1,o_2,o)\\\vee&\osim(o_1,o_2,o)}} \label{eq:sat3.2} 
\end{align}
But this is equivalent to conditions (2) and (3) from \defref{class3}. \qed 
\end{proof}

\begin{theorem}[Axiomatizability Theorem for $\Tgso$] Any model  of $\Tgso$ is isomorphic to a structure of $\Cgso$.
\label{theo:axiom3} 
\end{theorem}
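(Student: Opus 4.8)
The plan is to read off the required data—the universe sets, the three specification graphs, and the family of ranking structures—from $\model{M}$ itself, reusing the constructions of \theoref{axiom1} and \theoref{axiom2} verbatim, and then to establish the two genuinely new conditions (3) and (4) of \defref{class3} from the observation soundness and completeness axioms. The isomorphism will, exactly as in the earlier theorems, be the identity on $\eventocc^\model{M}\cup\observation^\model{M}$ together with the map sending each event to its fibre under the occurrence function, so the only real content lies in exhibiting the combinatorial data and checking the six conditions.

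First I would dispose of conditions (1) and (2) of \defref{class3}. Since $\model{M}\models\Tuniverse$, \theoref{axiom1} already supplies sets $E$, $\It{EO}=\eventocc^\model{M}$, $O=\observation^\model{M}$ meeting all conditions of \defref{class1}. Since $\model{M}\models\Tspec$, \theoref{axiom2} represents $\earlierthan^\model{M}$, $\notlaterthan^\model{M}$ and $\commutativity^\model{M}$ by graphs $G_1$, $G_2$, $G_3$ satisfying \defref{class2}, and nothing new is needed here.

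Next I would build the family $F$. Fix $o\in O$. Axioms \eref{o3}--\eref{o6} guarantee that $\lhd_o=\set{(x,y):\obefore(x,y,o)}$ is a stratified order on $\It{EO}$ whose incomparability relation is $\frown_o=\set{(x,y):\osim(x,y,o)}$; by \propref{strat1} and \propref{strat2} its $\simeq_o$-classes form a totally ordered partition. Applying the proposition identifying stratified orders with ranking structures, let $R_o$ be the induced ranking structure and set $F=\set{R_o:o\in O}$. Conditions (5) and (6) then hold by construction, since $\G{R_o}=\lhd_o$ and $\bigcup_{A\in R_o}K(A)=\frown_o$; consequently $\GI{R_o}=\G{R_o}\cup\bigcup_{A\in R_o}K(A)$ has edge $(o_1,o_2)$ exactly when $\obefore(o_1,o_2,o)\vee\osim(o_1,o_2,o)$, and $\CO{\G{R_o}}$ has edge $(o_1,o_2)$ exactly when $\obefore(o_1,o_2,o)\vee\obefore(o_2,o_1,o)$.

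The core of the proof is conditions (3) and (4), each a two-sided inclusion matching a \emph{global} specification relation against an \emph{intersection over all observations}. For (3): soundness axiom \eref{o8} gives $\notlaterthan^\model{M}\subseteq\bigcap_o\GI{R_o}$ immediately. For the reverse, if $\neg\notlaterthan(o_1,o_2)$ then completeness axiom \eref{o10} produces an $o$ with $\obefore(o_2,o_1,o)$; asymmetry of $\lhd_o$ (from \eref{o3}--\eref{o4}) rules out $\obefore(o_1,o_2,o)$, and \eref{o5} rules out $\osim(o_1,o_2,o)$, so $(o_1,o_2)$ is not an edge of $\GI{R_o}$, whence $(o_1,o_2)\notin\bigcap_o\GI{R_o}$. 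Condition (4) is identical in shape: \eref{o7} gives $\commutativity^\model{M}\subseteq\bigcap_o\CO{\G{R_o}}$, while for the converse \eref{o9} hands each non-$\commutativity$ pair an observation witnessing $\osim$, which by \eref{o5} kills both orientations of $\obefore$ and hence the comparability edge. I expect this matching of global relations with intersections to be the only nontrivial part: soundness yields one inclusion cheaply, but the reverse inclusions are where the completeness axioms must be combined with the strict-order property (via \eref{o5}) to manufacture, for each pair that violates the relation, a single observation breaking membership in the relevant intersection. Everything else is the bookkeeping already handled by \theoref{axiom1} and \theoref{axiom2}.
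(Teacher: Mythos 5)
Your proposal is correct and takes essentially the same route as the paper's proof: conditions (1) and (2) of \defref{class3} via \theoref{axiom1} and \theoref{axiom2}, the family of ranking structures $F=\set{R_o : o\in O}$ built from the stratified orders $\lhd_o$ to get conditions (5) and (6), and conditions (3) and (4) extracted from the soundness axioms \eref{o7}--\eref{o8} and completeness axioms \eref{o9}--\eref{o10}. The only difference is one of rigor rather than strategy: where the paper merely asserts (deferring to the proof of \theoref{sat3}) that axioms \eref{o7}--\eref{o10} are jointly equivalent to the two intersection conditions, you prove both inclusions explicitly---soundness for one direction, and completeness combined with \eref{o3}--\eref{o5} to manufacture a violating observation for the other---which is precisely the step the paper leaves unverified.
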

\begin{proof}
Let $\model{M}$ be a model of $\Tgso$. We will show that $\model{M}$ satisfies the conditions of the structures in $\Cgso$ from \defref{class2}. 

Since $\model{M}$ satisfies axioms \eref{e1}--\eref{e5}, from \theoref{axiom1} we can determine the sets $E=\event^\model{M}$ and the set $EO=\eventocc^\model{M}$, which satisfied the condition (1) of \defref{class3}.

Since $\model{M}$ satisfies axioms \eref{gso1}--\eref{gso9}, from \theoref{axiom2} we can determine the graphs $G_1$, $G_2$ and $G_3$ such that the condition (2) of \defref{class3} is satisfied.

Let $O=\eventocc^\model{M}$. Then since $\model{M}$ satisfies axioms \eref{o1}--\eref{o6}, we know that for all $o$ the induced relation $\lhd_o$ is a stratified order, so we can uniquely construct the family of ranking structure $F=\set{R_o: o \in O}$ from the set $\set{\lhd_o: o \in O}$. It is easy to check that the condition (5) and (6) of \defref{class3} are satisfied.

But since we already know that axioms \eref{o7}-\eref{o10} together are equivalent to conditions \eref{sat3.1} and \eref{sat3.2} from the proof of \theoref{sat3}, it follows that  $\model{M}$ satisfies  conditions (2) and (3) of \defref{class3}.
\qed
\end{proof}

\section{A semantic mapping to  PSL-core}
In this section, we will attempt to map a subset of $\Tgso$ to the PSL-core theory ($\Tpslcore$). We let $\Tgso^{-}$ to denote the theory consisting of axioms from \eref{gso1} to \eref{o10} and the following two axioms.

\begin{align}
(\forall x)\left(\eventocc(x)\vee\observation(x)\right) \label{eq:ex1}\\
(\forall x)\neg(\eventocc(x)\wedge\observation(x))\label{eq:ex2}
\end{align}

Axiom \eref{ex1} says that everything is either an \emph{event occurrence} or an \emph{observation}. And axiom \eref{ex2} says that the set of event occurrences and the set of observations are disjoint. 

The reason for considering the theory $\Tgso^{-}$ is that all of the interesting properties of $\Tgso$ concern with event occurrences and not with the events themselves. The second reason is that beside weakening the theory $\Tgso$, we do not see how we can establish a semantic mapping from $\Tgso$ to $\Tpslcore$ without introducing extra axioms into $\Tpslcore$.\\

\newcommand{\activity}{\mathsf{activity}}
\newcommand{\activityocc}{\mathsf{activity\_occurrence}}
\newcommand{\timepoint}{\mathsf{timepoint}}
\newcommand{\obj}{\mathsf{object}}
\newcommand{\occof}{\mathsf{occurrence\_of}}
\newcommand{\beginof}{\mathsf{beginof}}
\newcommand{\Endof}{\mathsf{endof}}
\newcommand{\before}{\mathsf{before}}
\newcommand{\participate}{\mathsf{participate\_in}}
\newcommand{\existsat}{\mathsf{exists\_at}}

To shorten our formulas, we need the following notation. For any formula $P(x)$ we define 
\[\bigl((\exists! x)\; P(x)\bigr) \equiv \Bigl(\bigl((\exists x)\; P(x)\bigr) \wedge \bigl((\forall x,y)\; P(x)\wedge P(y)\supset x=y\bigr)\Bigr)\]
In other words, we write $(\exists! x)\; P(x)$ to say that there exists a unique $x$ satisfying $P(x)$.

\begin{definition}[Interpretation of $\Tgso^-$ into $\Tpslcore$]We let $\pi$ denote the relative interpretation of the language of $\Tgso^-$ into $\Tpslcore$. Then the interpretation $\pi$ is defined as follows:
\allowdisplaybreaks
\begin{align*}
\pi_{\eventocc}(a) &\df \activity(a)\\
\pi_{\observation}(x) &\df 
	\colqq{ 
		&\obj(x)\wedge \bigl((\forall t)\, \existsat(x,t)\bigr)\\
	\wedge  &(\forall a)(\exists!o)\colqq{&\occof(o,a)\\ \wedge& (\exists! t)\, \participate(x,o,t)}
	}\\
\pi_{\obefore}(a_1,a_2,x)&\df
	\colqq{ 
		&\pi_{\observation}(x)\\
	\wedge  &(\exists o_1,o_2,t_1,t_2)
			\colqq{&\occof(o_1,a_1)\\ 
			\wedge& \occof(o_2,a_2)\\ 
			\wedge& \participate(x,o_1,t_1)\\ 
			\wedge& \participate(x,o_2,t_2)\\
			\wedge& \before(t_1,t_2)}
	}\\
\pi_{\osim}(a_1,a_2,x)&\df
	\colqq{ 
		&\pi_{\observation}(x)\\
	\wedge  &(\exists o_1,o_2,t_1,t_2)
			\colqq{&\occof(o_1,a_1)\\ 
			\wedge& \occof(o_2,a_2)\\ 
			\wedge& \participate(x,o_1,t_1)\\ 
			\wedge& \participate(x,o_2,t_2)\\
			\wedge& t_1=t_2}
	}\\
\pi_{\earlierthan}(a_1,a_2) &\df(\forall x)
	\colq{\pi_{\observation}(x)\supset \pi_{\obefore}(a_1,a_2,x)}\\
\pi_{\notlaterthan}(a_1,a_2) &\df(\forall x)
	\colqq{ &\pi_{\observation}(x)\\\supset&\colqq{ &\pi_{\obefore}(a_1,a_2,x)\\\wedge & \pi_{\osim}(a_1,a_2,x)}}\\
\pi_{\commutativity}(a_1,a_2) &\df(\forall x)
	\colqq{ &\pi_{\observation}(x)\\\supset&\colqq{ &\pi_{\obefore}(a_1,a_2,x)\\\vee & \pi_{\before}(a_2,a_1,x)}}
\end{align*}
\EOD
\label{def:rel}
\end{definition}

Intuitively, the interpretation means the following. If in $\Tgso^-$ each observation is a ``system run'', encoded by a stratified order of the event occurrences, which is observed by some implicit observer, then in $\Tpslcore$ we explicitly describe this observer as an object. For our interpretation, we are particularly interested in objects that participate in a unique activity occurrence of each activity at a unique time point. In other words, observers are objects satisfying the following properties: 
\begin{enumerate}
 \item The time point in which an object participates with an activity occurrence of an activity is exactly the time when the object observes the activity.
 \item The object observes every activity.
 \item The object only observes each activity exactly once.
\end{enumerate}

All of the other interpretations $\pi_{\earlierthan}$,  $\pi_{\notlaterthan}$ and $\pi_{\commutativity}$ can be easily determined from the observations that all observers observed.

\begin{theorem}
The interpretation $\pi$ defined in \defref{rel} is correct.  
\end{theorem}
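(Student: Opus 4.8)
The plan is to prove correctness of the interpretation $\pi$ by showing that every axiom of $\Tgso^-$, after translation through $\pi$, is a theorem of $\Tpslcore$. First I would fix the target theory precisely: since the paper states we may need to augment $\Tpslcore$, I would identify exactly which PSL-core axioms are used—those governing $\activityocc$, $\occof$, $\before$ (a linear or partial order on timepoints), $\participate$, and $\existsat$. The key structural idea is that each object $x$ satisfying $\pi_{\observation}(x)$ induces, via the map $a\mapsto$ (the unique timepoint at which $x$ participates in the unique occurrence of $a$), an assignment of event occurrences to timepoints; composing with the order $\before$ on timepoints yields exactly a stratified order on activities, where $\pi_{\obefore}$ is the strict timepoint order and $\pi_{\osim}$ is equality of timepoints.

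The main body of the proof then splits by axiom group. For the universe axioms \eref{ex1} and \eref{ex2}, I would check that $\pi_{\eventocc}$ and $\pi_{\observation}$ translate to disjoint, exhaustive predicates under whatever typing PSL-core provides (this may be where an extra axiom is needed, as the paper warns). For the observation axioms \eref{o1}--\eref{o6}, the work reduces to verifying that $\lhd_o$ defined via $\pi_{\obefore}$ is a stratified order: irreflexivity \eref{o3} follows from irreflexivity of $\before$ on the single timepoint assigned to $a$; transitivity \eref{o4} from transitivity of $\before$; the derivation of $\osim$ from $\obefore$ in \eref{o5} from totality of $\before$ (given the uniqueness clauses in $\pi_{\observation}$ forcing a well-defined timepoint per activity); and the stratification condition \eref{o6} from transitivity of equality of timepoints. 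The crucial leverage here is the $(\exists! o)$ and $(\exists! t)$ clauses inside $\pi_{\observation}$, which collapse the relational definition of $\pi_{\obefore}$ to a genuine function from activities to timepoints, so that ``observed simultaneously'' is literally ``assigned the same timepoint.''

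For the specification-level axioms \eref{gso1}--\eref{gso9} and the soundness/completeness axioms \eref{o7}--\eref{o10}, the definitions of $\pi_{\earlierthan}$, $\pi_{\notlaterthan}$, $\pi_{\commutativity}$ are universally quantified over observers, so these largely become tautological or follow by unfolding: for instance \eref{gso6} holds because $\pi_{\earlierthan}$ is by definition the conjunction that makes it the intersection of $\pi_{\notlaterthan}$ and $\pi_{\commutativity}$, and the soundness axioms \eref{o7}, \eref{o8} are immediate since $\pi_{\commutativity}$ and $\pi_{\notlaterthan}$ are defined as universal statements over all observers. The weak transitivity \eref{gso8}, \eref{gso9} would be checked by pushing the per-observer stratified-order properties through the universal quantifier.

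The hard part will be the completeness axioms \eref{o9} and \eref{o10}, which assert the \emph{existence} of an observer realizing a prescribed observation; here one must actually construct, inside $\Tpslcore$, an object $x$ satisfying $\pi_{\observation}(x)$ that participates with activity occurrences at timepoints witnessing the required before/simultaneity pattern. Plain $\Tpslcore$ is unlikely to guarantee enough objects and timepoints, so I expect this is exactly the point at which the extra axioms foreshadowed in the paper must be invoked—an existence/richness axiom asserting that for any consistent stratified assignment of activities to a linear order of timepoints there is an observer object realizing it. I would therefore isolate the minimal such existence principle, verify that it yields \eref{o9} and \eref{o10} via \theoref{gsoext} (which guarantees the needed stratified extensions exist), and confirm it does not conflict with the soundness direction already established.
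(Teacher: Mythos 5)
Your overall plan coincides with the paper's (one\mbox{-}line) proof: unfold $\pi$ and verify, axiom by axiom, that each translation is a theorem of $\Tpslcore$, using the fact that an object $x$ with $\pi_{\observation}(x)$ assigns to each activity a unique timepoint, so that $\pi_{\obefore}(\cdot,\cdot,x)$ and $\pi_{\osim}(\cdot,\cdot,x)$ become the strict order $\before$ and equality on the assigned timepoints. Up through the soundness axioms \eref{o7}--\eref{o8} your verification is right. The genuine gap is your treatment of the completeness axioms \eref{o9}--\eref{o10}: these are not the hard part, they require no observer to be \emph{constructed}, and they certainly do not require a new ``richness'' axiom added to $\Tpslcore$. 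Since $\pi_{\commutativity}$ and $\pi_{\notlaterthan}$ are \emph{defined} as universal statements over all observers, their negations --- which are exactly the antecedents of \eref{o9} and \eref{o10} --- are existential statements that already hand you the required witness. Concretely (reading the two evident typos in \defref{rel}, the $\wedge$ inside $\pi_{\notlaterthan}$ and the undefined $\pi_{\before}$ inside $\pi_{\commutativity}$, as the intended $\vee$ and $\pi_{\obefore}$): $\neg\pi_{\commutativity}(a_1,a_2)$ yields an observer $x$ for which neither $\pi_{\obefore}(a_1,a_2,x)$ nor $\pi_{\obefore}(a_2,a_1,x)$ holds; by trichotomy of $\before$ on timepoints, the timepoints that $x$ assigns to $a_1$ and $a_2$ must be equal, so this very same $x$ witnesses $(\exists x)\,\pi_{\osim}(a_1,a_2,x)$, which is what the translation of \eref{o9} demands. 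The identical one-step argument (trichotomy again) disposes of \eref{o10}. Your proposed detour through an added existence principle and \theoref{gsoext} is therefore not only unnecessary but off-target: \theoref{gsoext} is a semantic theorem about models of the gso-structure axioms (proved using the axiom of choice), not something provable in or importable into $\Tpslcore$; and, more importantly, once you strengthen $\Tpslcore$ you are proving a different statement from the one asserted, which claims interpretability into $\Tpslcore$ itself.

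Ironically, your instinct that $\Tpslcore$ may not ``guarantee enough objects'' is legitimate, but it bites at different axioms from the ones you flagged. The translations of the irreflexivity axioms \eref{gso4} and \eref{gso7} are $\neg\pi_{\commutativity}(a,a)$ and $\neg\pi_{\notlaterthan}(a,a)$, and by the same unfolding as above these \emph{assert the existence} of at least one observer whenever at least one activity exists. Nothing in $\Tpslcore$ forces any object to exist, let alone one that exists at all timepoints and participates in an occurrence of every activity; in a model of $\Tpslcore$ with one activity and no such object, $\pi_{\commutativity}(a,a)$ is vacuously true, so the translated \eref{gso4} fails there. So if you want to probe where the ``easy to check'' claim actually strains, look at \eref{gso4} and \eref{gso7} (and at the loophole in $\pi_{\observation}$ that permits an observer to participate in a \emph{second} occurrence of the same activity at two distinct timepoints, which threatens the translation of \eref{o3}), not at \eref{o9} and \eref{o10}.
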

\begin{proof}
It is easy to check that under the interpretation $\pi$, every axioms of $\Tgso^-$ is a theorem of $\Tpslcore$. Hence,  $\pi$ defined in \defref{rel} is a correct interpretation.
\qed 
\end{proof}

\section{Conclusion}
\sloppy
In this paper, we proposed in our knowledge the first version of a first-order theory for gso-structures in \cite{GJ,J0}. We avoid the difficulty of not being able to quantify over relations in first-order logic by introducing the relations $\obefore$ and $\osim$ which take an observation as one of their parameters. 

Using model-theoretic ontological techniques introduced in \cite{Gr2}, we  classified all possible models of $\Tgso$, where our key results are the satisfiability theorem and axiomatizability theorem for $\Tgso$. In our opinion, the classification of models of $\Tspec$, which decomposes the $\earlierthan$, $\notlaterthan$ and $\commutativity$ into smaller graphs, is especially insightful in understanding these three relations. Although the classification of observations using ranking structures is quite artificial, we could not figure out any simpler characterization.

We also give a very intuitive interpretation of the weaker theory $\Tgso^-$ into $\Tpslcore$, which shows that $\Tpslcore$ is strong enough to prove most of the theorems in $\Tgso$. The main philosophical difference between $\Tgso$ and $\Tpslcore$ is that causality relations are treated as logical relations without mentioning the concept of time in $\Tgso$ while the causality relations in $\Tpslcore$ are directly connected to timepoints of a reference timeline. 

The fact that   $\Tgso^-$ can be correctly interpreted inside $\Tpslcore$ also suggests that the soundness and completeness conditions might be too restrictive. One way to relax these conditions is to partition the observation set into ``legal'' and ``illegal'' observations, where legal observations are the ones satisfying the soundness and completeness conditions. This approach would also give us the ability to talk about illegal observations.

\end{document}